\newcommand{\RR}{{\mathbb R}}
\def\1{\bm{1}}
\def\rvw{{\mathbf{w}}}
\def\rvx{{\mathbf{x}}}
\def\rvy{{\mathbf{y}}}
\def\rvz{{\mathbf{z}}}
\def\vzero{{\bm{0}}}
\def\vmu{{\bm{\mu}}}
\def\vm{{\bm{m}}}
\def\vv{{\bm{v}}}
\def\vw{{\bm{w}}}
\def\vx{{\bm{x}}}
\def\vy{{\bm{y}}}
\def\vz{{\bm{z}}}
\def\vmu{{\bm{\mu}}}
\def\vsigma{{\bm{\sigma}}}
\def\mA{{\bm{A}}}
\def\mB{{\bm{B}}}
\def\mH{{\bm{H}}}
\def\mI{{\bm{I}}}
\def\mQ{{\bm{Q}}}
\def\mT{{\bm{T}}}
\def\mV{{\bm{V}}}
\DeclareMathAlphabet{\mathsfit}{\encodingdefault}{\sfdefault}{m}{sl}
\SetMathAlphabet{\mathsfit}{bold}{\encodingdefault}{\sfdefault}{bx}{n}
\def\gI{{\mathcal{I}}}
\def\gL{{\mathcal{L}}}
\def\gP{{\mathcal{P}}}
\newcommand{\grad}{\ensuremath{\nabla}}
\newcommand{\R}{\mathbb{R}}
\newcommand{\der}{\mathrm{d}}
\DeclareMathOperator{\Tr}{Tr}
\def\thickhline{%
  \noalign{\ifnum0=`}\fi\hrule \@height \thickarrayrulewidth \futurelet
   \reserved@a\@xthickhline}
\def\@xthickhline{\ifx\reserved@a\thickhline
               \vskip\doublerulesep
               \vskip-\thickarrayrulewidth
             \fi
      \ifnum0=`{\fi}}
\newlength{\thickarrayrulewidth}
\newtheorem{lemma}{Lemma}[section]
\newtheorem{theorem}[lemma]{Theorem}
\title{An Improved Analysis of Langevin Algorithms with Prior Diffusion for Non-Log-Concave Sampling}
\author[$\dagger$]{\normalsize Xunpeng Huang\thanks{Mail to \href{xhuangck@connect.ust.hk}{xhuangck@connect.ust.hk}, \href{dzou@cs.hku.hk}{dzou@cs.hku.hk}}}
\author[$\dagger$]{\ Hanze Dong}
\author[$\S$]{\ Difan Zou}
\author[$\ddag$]{\ Tong Zhang}
\affil[$\dagger$]{Hong Kong University of Science and Technology}
\affil[$\S$]{The University of Hong Kong}
\affil[$\ddag$]{University of Illinois Urbana-Champaign}
\begin{document}

\date{}
\maketitle

\begin{abstract}
Understanding the dimension dependency of computational complexity in high-dimensional sampling problem is a fundamental problem, both from a practical and theoretical perspective. 
Compared with samplers with unbiased stationary distribution, e.g., Metropolis-adjusted Langevin algorithm (MALA), biased samplers, e.g., Underdamped Langevin Dynamics (ULD), perform better in low-accuracy cases just because a lower dimension dependency in their complexities.
Along this line,~\citet{freund2022convergence} suggest that the modified Langevin algorithm with prior diffusion is able to converge dimension independently for strongly log-concave target distributions. 
Nonetheless, it remains open whether such property establishes for more general cases.
In this paper, we investigate the prior diffusion technique for the target distributions satisfying log-Sobolev inequality (LSI), which covers a much broader class of distributions compared to the strongly log-concave ones.
In particular, we prove that the modified Langevin algorithm can also obtain the dimension-independent convergence of KL divergence with different step size schedules.
The core of our proof technique is a novel construction of an interpolating SDE, which significantly helps to conduct a more accurate characterization of the discrete updates of the overdamped Langevin dynamics.
Our theoretical analysis demonstrates the benefits of prior diffusion for a broader class of target distributions and provides new insights into developing faster sampling algorithms.


  
\end{abstract}

\section{Introduction}

Sampling from unnormalized distribution aims to obtain the particles from a target density function $p_*\propto \exp(-U)$ on $\RR^d$, which is a fundamental task in statistics \citep{neal1993probabilistic}, scientific computing \citep{robert1999monte}, and machine learning \citep{bishop2006pattern}.
Monte Carlo Markov Chain (MCMC) \citep{gilks1995markov,brooks2011handbook} is a class of sampling algorithms to construct Markov chains, achieving the target distribution when it becomes stable. Many previous works concentrate on the Markov properties to investigate the behavior of MCMC algorithms. 

Langevin algorithms, a specific type of MCMC methods, offer a compelling approach to constructing Markov chains for sampling. These algorithms leverage the Stein score, which corresponds to the gradient of the logarithm of the density function, making them practical and efficient in real-world applications.
Thus, the variants of Langevin algorithms 
\citep{rossky1978brownian}
have gained significant popularity in the sampling literature, such as 
Unadjusted Langevin Algorithm (ULA) \citep{dalalyan2017theoretical,vempala2019rapid,durmus2019high}, Stochastic Gradient Langevin Dynamics (SGLD) \citep{welling2011bayesian,teh2016consistency}  and Metropolis-Adjusted Langevin Algorithm (MALA) \citep{roberts2002langevin,xifara2014langevin}. 
One intriguing aspect of continuous Langevin algorithms is their connection to the gradient flow of the Kullback-Leibler (KL) divergence 
 \citep{risken1996fokker}. This characteristic makes them valuable as first-order optimizers in probability spaces, especially when dealing with different subsampling and discretization strategies \citep{jordan1998variational,jordan1999introduction,wibisono2018sampling,ma2019there}. 

For a more in-depth analysis of optimization algorithms, the objectives are usually restricted to a certain family of functions, e.g., strong convexity or Polyak-Lojasiewicz (PL) inequality.
Similarly, in sampling problems, researchers often consider target distributions, denoted as $p_*$, that adhere to two types of assumptions: (1) log-concavity and (2) isoperimetric inequalities. 
The first case is presented as the convexity of $-\log p_*$, resulting in the KL divergence to $p_*$ being a strongly geodesic convex functional ~\citep{ambrosio2005gradient}.
For the second case, isoperimetric inequalities, such as Poincar\'e inequality and log-Sobolev inequality (LSI), encompass a broader range of distribution classes. 
Beyond the log-concavity, it also includes non-log-concave distributions, such as mixtures and perturbations of  log-concave distributions, which are of great interest in real applications.
Besides, ~\citet{wibisono2018sampling} explain LSI as the gradient-dominant condition or PL condition in a geodesic sense.
Although both of these two assumptions will lead to the linear convergence of KL divergence for continuous Langevin algorithms just like their counterpart in Euclidean space, compared with the variants of gradient descent in conventional optimization \citep{nesterov2018lectures}, the convergence rate of discrete Langevin algorithms
 usually has an additional $\mathcal{O}(d)$ dimensional dependency \citep{dalalyan2017theoretical,cheng2018convergence,ma2019sampling,dalalyan2019user,vempala2019rapid}. 
 The distinction between the convergence of sampling and optimization is so conspicuous and needs to be explored further.

To fill this gap, \citet{freund2022convergence} suggest that Langevin algorithms can achieve dimension-independent convergence rate for posterior sampling when the negative log-likelihood is convex and Lipschitz smooth (or Lipschitz continuous). With tractable strongly log-concave prior, the convergence rate of the modified Langevin dynamics\footnote{The diffusion term is solved with both Brownian motion and the tractable prior. When the prior is Gaussian, the diffusion term is the solution of Ornstein–Uhlenbeck (OU) process.} only depends on the trace of log-likelihood Hessian, rather than the explicit dimension $d$. In the ridge separable case, 
the trace of Hessian can be independent of the dimension $d$~\citep{freund2022convergence}. 
However, for more general target distributions beyond log-convexity, whether such a dimension-independent convergence will establish remains unknown.  
From a technical view, 
due to the lack of convexity ~\citep{durmus2019high,freund2022convergence} and irregular iterations 
it is more sophisticated to
control the bias deliberately for the inexact discretization of the continuous Langevin dynamics \cite{vempala2019rapid}. 

Therefore, in this paper, we attempt to answer the question
\begin{quote}
\centering
\emph{Can we achieve dimension-independent convergence rate for sampling from non-log-concave distributions?}
\end{quote}
In particular, we will focus on the distributions that satisfy LSI, a general condition for non-log-concave distributions that have been widely made in prior works~\citep{cheng2018convergence, vempala2019rapid, ma2019there}. 
Additionally, we perform a slight modification  on the Langevin algorithm with prior diffusion considered in \citet{freund2022convergence} and investigate the convergence rate for this modified version. 
Technically, based on the equivalent transition kernel of the Markov Chain deduced by the iterations, we 
consider performing an analytic continuation for the two-stage discretized algorithm rather than relying on the geodesic convexity of KL divergence. 
By tracking the evolution of an analytic continuation of the density, we can obtain the 
largest possible step sizes by controlling the discretization-induced error.
The convergence rate of the proposed algorithm can be deduced subsequently.
Besides, we also introduce varying step sizes to improve the iteration complexity with a logarithmic factor.
The main contributions of this paper can be summarized in the following three perspectives:
\begin{itemize}[leftmargin=*]
    \item We show that under log-Sobolev inequality and proper regularity conditions, the KL convergence rate of Langevin dynamics with prior diffusion can be $\tilde{\mathcal{O}}\left(\mathrm{Tr}(\mH)\epsilon^{-1}\right)$ where the matrix $\mH$ depends on the property of $U$, rather than $\tilde{\mathcal{O}}\left(d\epsilon^{-1}\right)$ in ULA or $\tilde{\mathcal{O}}\left(-d\log^{\mathcal{O}(1)}(\epsilon)\right)$ in MALA. Compared with~\cite{freund2022convergence}, the target distribution in our work can go beyond strongly log-concave and thus covers a much larger class of non-log-concave target distributions.
    \item From a technical perspective, we innovatively construct an interpolate SDE to approximate the gradient flow, which helps to characterize a multi-stage update in overdamped Langevin dynamics more accurately and inspire the design of samplers, especially for composite and finite-sum potentials.
    Besides, compared with the specially designed step sizes in~\cite{freund2022convergence}, our technique can cover more general steps in practice.
    \item From an extension perspective, due to a good correspondence to the analysis in~\citep{vempala2019rapid} (one of the most flexible frameworks to analyze overdamped Langevin), our analysis can be easily generalized to a broader class of Langevin algorithms such as underdamped Langevin MCMC, SGLD, MALA, and even proximal samplers.
\end{itemize}
\section{Related Work}
\label{sec:related_work}
The dimension dependency of the computational complexity has been of interest to the sampling community for a long time.
In the following, we will divide the sampling algorithms into several categories and introduce their dimension dependency under different settings. 

\paragraph{Biased samplers.} Algorithms stemming from the discretization of stochastic processes, which have a stationary distribution denoted as $p_*$, including Langevin and underdamped Langevin diffusions, are the focus of this discussion. The bias inherent in the stationary distribution causes the overall runtime to scale with $\mathcal{O}(\epsilon^{-1})$. In the realm of Langevin diffusion, \citet{dalalyan2017further,dalalyan2019user} initially established a convergence rate for the Wasserstein 2 distance of $\tilde{\mathcal{O}}(d\epsilon^{-2})$ when $p_*$ exhibits strong log-concavity. Subsequently, \citet{cheng2018convergence,vempala2019rapid} achieved a convergence rate for the KL divergence of $\tilde{\mathcal{O}}(d\epsilon^{-1})$, under strong log-concavity and the Logarithmic Sobolev Inequality (LSI) conditions, respectively. For underdamped Langevin diffusion, \citet{shen2019randomized} reached a state-of-the-art convergence of the Wasserstein 2 distance at $\tilde{\mathcal{O}}(d^{1/3}\epsilon^{-2/3})$, applicable when $p_*$ is strongly log-concave. Furthermore, \citet{ma2019there} demonstrated an $\mathcal{O}(d^{1/2}\epsilon^{-1/2})$ KL convergence rate for underdamped Langevin diffusion, assuming LSI and high-order smoothness conditions for $p_*$. More recently, \citet{huang2024reverse} developed a novel diffusion-based Monte Carlo method that can provably sample from broader target distributions not limited by the LSI condition. This method's convergence rate was notably enhanced through the integration of a recursive score estimation technique, as detailed in \citet{huang2024faster}.

\paragraph{Unbiased samplers.} Such algorithms are typically designed in a way that the stationary distribution is unbiased w.r.t. the target distribution, which is achieved by introducing a Metropolis-Hastings filter to each step.
It makes the iteration complexity usually related to $\log(1/\epsilon)$ rather than $\epsilon^{-1}$.
However, the filter which debiases the algorithm also greatly complicates the analysis, and \citet{dwivedi2018log,chen2020fast} provide $\tilde{\mathcal{O}}(d\log^{O(1)}(1/\epsilon))$ Wasserstein 2 distance when $p_*$ is strongly log-concave. Recently,
\citet{altschuler2023faster} provides $\tilde{\mathcal{O}}(d^{1/2}\log^{O(1)}(1/\epsilon))$ KL convergence with proximal sampler reduction and even extend the assumption of $p_*$ to the LSI. 

\paragraph{Dimension-free samplers.} Such samplers usually expect the spectrum property of $-\log p_*$ to bring some benefit to the iteration complexity. 
When $p_*$ is strongly log-concave, A very recent work~\citep{freund2022convergence} find a Langevin algorithm to achieve an $\tilde{\mathcal{O}}(\Tr(\mH^{1/2}) \epsilon^{-1})$ dimension-independent  KL convergence rate where $\mH$ is an uniform upper bound of the Hessian matrix of $-\log p_*$. Besides, \citet{liu2024double} proved an $\tilde{\mathcal{O}}(\Tr(\mH)^{1/3} \epsilon^{-2/3})$ iteration complexity in Wasserstein distance for a double randomized ULD in the same setting, i.e., strongly log-concave.
The prior diffusion trick can be considered as re-balancing the coefficients of strong convexity for the energy term and the entropy term in~\citet{durmus2019analysis}. 
It is worth noting that the analysis along this line is similar to the techniques in conventional convex optimization ~\citep{durmus2019analysis} because the geodesic convexity in probability space is extremely similar to the convexity of objectives in  Euclidean space. 
However, unlike the first kind of proof, high dependency on geodesic convexity limits their extension to milder assumptions: the analysis under LSI can be hard.

In this work, our algorithm can be recognized as a dimension-free sampler while the analysis is more similar to biased samplers.
Specifically, we approximate the ideal gradient flow by controlling the discretization error along a constructed interpolate SDE.
This trick allows our analysis to be generalized to a broader class of Langevin algorithms such as underdamped Langevin MCMC, SGLD, and MALA.

\section{Problem Setup and Preliminaries}
\label{sec:problem_setup}
This section will first introduce the notations commonly used and problem settings in the following sections. 
Then, we will show the main algorithm, i.e., Langevin Algorithm with Prior Diffusion as Alg~\ref{alg:lapd}, and the slight differences between our version and that provided in~\citet{freund2022convergence}.
Besides, we will also explain the intuitions about designing Alg~\ref{alg:lapd}.
After that, we will demonstrate the main difficulty when we expect to extend such a result to a broader class of target distributions.

\paragraph{Notations.} 
In this paper, we use letters $\vw$ and $\rvw$ to denote vectors in $\R^d$ and random variables in $\R^d$.
For any function, $f:\RR^d\rightarrow\RR$, $\nabla f(\cdot)$ and $\nabla^2 f(\cdot)$ denote the corresponding gradient and Hessian matrix, respectively. The Euclidean norm (vector) and its induced norm (matrix) are denoted by $\Vert\cdot\Vert$. For density $p$ and $q$, the KL divergence is denoted by $\mathrm{KL}(p\|q)=\mathbb{E}_p[\log (p/q)]$.
Besides, suppose $\rvw\sim p$ and $\rvw^\prime\sim q$, and let $\Gamma(p,q)$  be the class of joint distribution $\gamma(\vw,\vw^\prime)\colon \R^d\times \R^d \rightarrow \R$ satisfying $\int \gamma(\vw, \vw^\prime)\der \vw^\prime = p(\vw)$ and $\int \gamma(\vw, \vw^\prime)\der \vw = q(\vw^\prime)$ , the Wasserstein distance between $p$ and $q$ is defined as
\begin{equation*}
    W_2^2(p,q) = \min_{\gamma \in\Gamma(p,q)} \mathbb{E}_{\rvw,\rvw^\prime \sim \gamma}\left[\left\|\rvw-\rvw^\prime\right\|^2\right].
\end{equation*}
Specifically, we say $p_*$ satisfies LSI with constant $\alpha_*$ if for all smooth
function $\phi\colon \R^d \rightarrow \R$ with $\mathbb{E}_{p_*}[\|\rvw\|^2]\le \infty$,
\begin{equation*}
    \begin{aligned}
    \mathbb{E}_{p_*}\left[\phi^2\ln \phi^2\right]-\mathbb{E}_{p_*}\left[\phi^2\right]\ln \mathbb{E}_{p_*}\left[\phi^2\right]\le \frac{2}{\alpha_*} \mathbb{E}_{p_*}\left[\left\|\nabla \phi\right\|^2\right].
    \end{aligned}
\end{equation*}
By choosing $\phi^2=p/p_*$, we have
\begin{equation*}
        \begin{aligned}        2\alpha_* \mathrm{KL}\left(p\|p_*\right)\le \int p(\vw) \left\|\grad \log\frac{p(\vw)}{p_*(\vw)}\right\|^2 \der\vw.
        \end{aligned}
\end{equation*}

\paragraph{Problem setup.} 
When the dataset is given as $\vz$, we consider sampling from a posterior distribution over parameter $\vw\in \R^d$:
\begin{equation*}
    p(\vw | \vz) \propto p(\vz | \vw)\pi(\vw) \propto \exp\left(-U(\vw)\right) \coloneqq p_*(\vw).
\end{equation*}
The potential $U\colon \R^d \rightarrow \R$ can be decomposed as $ U(\vw) = f(\vw) + g(\vw)$, where $f,g\colon \R^d \rightarrow \R$. 
This formulation is general in machine learning tasks. From a Bayesian view, we can divide the regularized empirical risk minimization loss to two parts: we can consider $f$ as the negative log-likelihood (loss function) and $g$ as the negative log-prior (regularization term).
Without loss of generality, we suppose $g$ is simple, and the SDE involving $g$ can be solved to high precision. 
Besides, the negative log-likelihood $f$ shares a similar setting with that in~\citep{freund2022convergence}.
\begin{enumerate}[label=\textbf{[{A}{\arabic*}]}]
    \item \label{ass:pri_g} The function $g$ is $m$-strongly convex and can be explicitly integrated. For simplicity, we choose $g(\vw)=(m/2)\left\|\vw\right\|^2$ in following sections.
    \item \label{ass:con_post_f} The function $f$ is second derivative, and for any $\vw\in\R^d$, it has
    \begin{equation*}
        \left\|\nabla^2 f(\vw)\right\|\preceq L\mI.
    \end{equation*}
    \item \label{ass:con_post_f2} For any $\vw\in\R^d$, we suppose
    \begin{equation*}
        \begin{aligned}
            \vzero\preceq \nabla^2 f(\vw) (\grad^2 f(\vw))^\top \preceq \mH^{\frac{1}{2}}\left(\mH^{\frac{1}{2}}\right)^\top = \mH \preceq L^2\mI,\quad \mathrm{where}\quad \mH^{\frac{1}{2}}\succeq \vzero.
        \end{aligned}
    \end{equation*}
    \item \label{ass:con_post_lsi} The posterior distribution, $p_* \propto \exp(-U)$, satisfies $\alpha_*$-log-Sobolev inequality (LSI).
\end{enumerate}
By choosing the Gaussian prior, \citet{freund2022convergence} actually needs the strongly log-concave property for the posterior distribution $p_*$.
In our analysis, $p_*$ is only required to satisfy LSI, i.e.,~\ref{ass:con_post_lsi}, which is a standard assumption in recent sampling works~\citep{vempala2019rapid,wibisono2019proximal} and covers a wider class of measures than log-concave distributions~\cite{bakry1985diffusions}.
It can be preserved with bounded perturbation and Lipschitz mapping~\cite{vempala2019rapid}  where log-concavity fails. 
For~\ref{ass:con_post_f2}, due to the general Hessian condition under LSI (may not be positive semi-definite), we require the upper bound for the Hessian square, which is slightly stronger than that (only assuming upper bound of Hessian) in~\citet{freund2022convergence}.
However, similar to~\citet{freund2022convergence},~\ref{ass:con_post_f2} provides the potential for achieving a sharper convergence independent of the dimension number.
Specifically, in Section~\ref{sec:diss}, we demonstrate that \ref{ass:app_pri_g}-\ref{ass:con_post_lsi} can be verified when the target distribution is a Gaussian mixture.


\paragraph{Langevin Algorithm with Prior Diffusion (LAPD).}
We show a variant of LAPD in Alg~\ref{alg:lapd}.
It should be noted that LAPD was first proposed by~\citep{freund2022convergence}.
Compared with that in~\citet{freund2022convergence}, the main differences lay in two aspects.
First, the order of stages is swapped in our variant, which only leads to the SDE stage at the first iteration being omitted.
Second, the output of Alg~\ref{alg:lapd} is $\tilde{\rvw}_T$ which violates the theoretical analysis in~\citet{freund2022convergence}.
The intuition of designing LAPD is the alternative iteration in composition optimization.
Specifically, for any distribution $p$, we decompose KL divergence between $p$ and the target distribution $p_*$, as follows
\begin{equation}
    \label{eq:kl_compos_func}
    \mathrm{KL}\left(p\|p_*\right) = \underbrace{\mathbb{E}_{\rvw\sim p}\left[f(\rvw)\right]}_{\mathrm{Term\ 1}} + \underbrace{\mathbb{E}_{\rvw\sim p}\left[g(\rvw)+\log p(\rvw)\right]}_{\mathrm{Term\ 2}}.
\end{equation}
According to~\ref{ass:pri_g}, the log-prior $g$ is relatively simple, e.g., quadratic functions, which means its SDE can be solved with high precision or even with a closed form.
Thus, the exact gradient flow of $\mathrm{Term\ 2}$
in Wasserstein space leads to exponential decay even though discretization error of $\mathrm{Term\ 1}$ will dominate the decrease of the total functional.

\paragraph{Main difficulties.} According to the Alg~\ref{alg:lapd}, we introduce some other notations.
Given a random variable $\tilde{\rvw}_0\sim \tilde{p}_0$, we would like to update $\tilde{\rvw}_k\sim \tilde{p}_k$, $t=1,\cdots,T$. 
From a composition optimization perspective, to get an explicit bound on the convergence of the sequence $\{f(\vw_k)\}_{k\in \mathbb{N}_+}$, deduced by the gradient descent type iteration, e.g.,
\begin{equation*}
    \vw_{k+1} = \vw_k - \eta_{k+1}\grad f(\vw_k) + \eta_{k+1}\xi(\vw_k),
\end{equation*}
to the minima $f(\vw_*)$,  one possibility (see~\citet{beck2009A}) is to upper bound function value differences by the distance among variables: 
\begin{equation*}
    \begin{aligned}
        2\eta_{k+1}\left(f(\vw_{k+1})-f(\vw_*)\right) \le \left\|\vw_k-\vw_*\right\|^2 - \left\|\vw_{k+1}-\vw_*\right\|^2 + C\eta_{k+1}^2,
    \end{aligned}
\end{equation*}
which highly depends on the convexity of function $f$.
Similarly, the convergence of Eq.~\ref{eq:kl_compos_func} requires
\begin{equation*}
    \begin{aligned}
    \mathrm{Term\ 1}\rightarrow \mathbb{E}_{\rvw\sim p_*}\left[f(\rvw)\right]\quad \mathrm{and}\quad \mathrm{Term\ 2}\rightarrow \mathbb{E}_{\rvw\sim p_*}\left[g(\rvw)+\log p(\rvw)\right].
    \end{aligned}
\end{equation*}
To upper bound the difference between $\mathrm{Term\ 1}$ and its limit with the Wasserstein 2 distance (corresponds to $L_2$ norm in Euclidean space), a crucial assumption is a geodesic convexity of $\mathrm{Term\ 1}$, which can be deduced by the convexity of $f$ in the target distribution $p_*$.
While such a convexity will be violated when $p_*$ only satisfies LSI.
Specifically, LSI is just like Polyak-Lojasiewicz (PL) inequality in a geodesic sense~\citep{wibisono2018sampling} due to the following equation
\begin{equation*}
    2\alpha_* \mathrm{KL}\left(p\|p_*\right)\le \left\|\mathrm{grad}_p \mathrm{KL}\left(p\|p_*\right) \right\|^2_p \coloneqq \int p(\vw) \left\|\frac{\delta \mathrm{KL}\left(p\|p_*\right)}{\delta p}\right\|^2 \der\vw.
\end{equation*}
However, unlike general convergence analysis with PL inequality, the functional Eq.~\ref{eq:kl_compos_func} does not have any property in Wasserstein space which is similar to the smoothness in Euclidean space.
Hence, there is no existing mature solution to extend the analysis of the optimization problem under PL inequality to the sampling problem under LSI inequality except for tracking the dynamics of SDE directly.
On the other hand, we will readily observe that, different from ULA~\citet{vempala2019rapid} described by one single SDE, the two-stage iteration of Alg.~\ref{alg:lapd} makes it challenging to present the dynamic of the particles with one It\^o's SDE.
We defer the solution in the next section.

\begin{minipage}{0.49\columnwidth}
    \begin{algorithm}[H]
    \caption{Langevin Algorithm with Prior Diffusion (LAPD)}
    \label{alg:lapd}
    \begin{algorithmic}[1]
            \STATE {\bfseries Input:} Initialization $\tilde{p}_0$, step size $\{\eta_k\}_{k\in\mathbb{N}}$, target $p_*\propto\exp(-f(x)-g(x))$.
            \STATE Draw $\tilde{\rvw}_0$ from $\tilde{p}_0$.
            \FOR{$k = 1$ to $T$}
                 \STATE{\bfseries Stage1:} Let 
                 \begin{equation*}
                    \small
                    \rvw_{k} = \tilde{\rvw}_{k-1} - \tilde{\eta}_{k-1}\grad f(\tilde{\rvw}_{k-1}).
                \end{equation*}
                \STATE{\bfseries Stage2:} Sample $\tilde{\rvw}_k$ from $\tilde{\rvw}(\eta_k)$ denoted by the following SDE
                \begin{equation*}
                    \small
                    \begin{aligned}
                        \tilde{\rvw}(\eta_k) = & \rvw_{k} -\int_0^{\eta_k} \grad g(\tilde{\rvw}(s))\der s \\
                        &+ \sqrt{2}\int_0^{\eta_k}\der \mB_t.
                    \end{aligned}
                \end{equation*}
                 \ENDFOR
                 \STATE {\bfseries Return:}  $\tilde{\rvw}_T$
    \end{algorithmic}
\end{algorithm}
\end{minipage}
\begin{minipage}{0.49\columnwidth}
    \begin{flushright}
    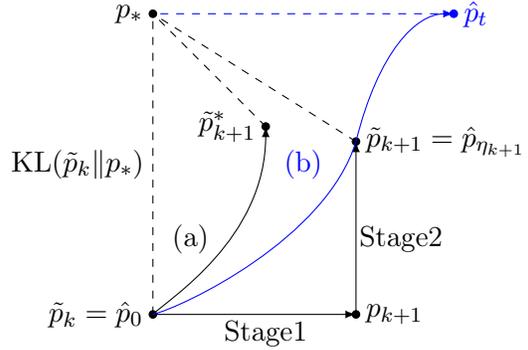
\begin{figure}[H]
\begin{tikzpicture}
\filldraw (0,0) circle (0.05) node[left]{$\tilde{p}_{k}=\hat{p}_0$};
\filldraw (0,4) circle (0.05) node[left]{$p_*$};
\draw[dashed] (0,0)--(0,4);
\node at (-1, 2) { $\mathrm{KL}(\tilde{p}_{k}\|p_*)$};
\node at (0.5, 1) { (a)};
\node[blue] at (2, 2) { (b)};
\node at (1.5, -0.25) { Stage1};
\node at (3.3, 1) { Stage2};
\filldraw (1.5,2.5) circle (0.05) node[left]{$\tilde{p}^*_{k+1}$};
\draw[-latex] (0,0)..controls (0.2, 0.2)and(1.5, 1)..(1.5,2.5);
\draw[dashed] (1.5,2.5)--(0,4);
\filldraw (2.7,2.3) circle (0.05) node[right]{$\tilde{p}_{k+1}=\hat{p}_{\eta_{k+1}}$};
\filldraw (2.7, 0) circle (0.05) node[right]{$p_{k+1}$};
\draw[-latex] (0,0)--(2.7,0);
\draw[-latex] (2.7,0)--(2.7,2.3);
\draw[dashed] (2.7,2.3)--(0,4);
\draw[blue] (0,0)..controls (0.4, 0.1)and(2.3, 1)..(2.7,2.3);
\draw[-latex,blue] (2.7,2.3)..controls (3, 3.5)and(3.5,4)..(4,4);
\filldraw[blue] (4, 4) circle (0.05) node[right]{$\hat{p}_{t}$};
\draw[dashed, blue] (4,4)--(0,4);
\end{tikzpicture}
\caption{An illustration for the proof of Lemma~\ref{lem:con_kl_contraction}. In each iteration, we compare the evolution of
(a) the continuous-time KL divergence flow for time $\eta_k$, and (b) the constructed SDE equivalent to a two-stage update at time $\eta_k$.} \label{fig:M1}
\end{figure}
    \end{flushright}
\end{minipage}

\section{Theoretical Results}
In this section, we will show the dimension-independent convergence rate of Alg~\ref{alg:lapd} with different step size schedules, e.g., fixing step sizes and varying step sizes, and compare them with previous work.
Besides, we will provide a roadmap for our analysis, which helps to understand our theoretical results.
Due to space limitations, we defer details in the Appendix.

\paragraph{KL convergence with a fixed step size.}
Our main result for Alg~\ref{alg:lapd} can be stated as follows.
\begin{theorem}
    \label{thm:kl_con_fx}
    Assume the posterior $p_*$ satisfies~\ref{ass:pri_g}-\ref{ass:con_post_lsi},and $\alpha_*\le L$ without loss of generality, we suppose the step size $\eta_k =\eta \in (0, \hat{\eta}]$ for $k\in\{1,2,\ldots T\}$ where
    \begin{equation*}
        \hat{\eta}\coloneqq \min\left\{(8m)^{-1}, (8\Tr(\mH^{\frac{1}{2}}))^{-1}, (1.5\alpha_*)^{-1}, \frac{\alpha_*}{8\sqrt{2}L^2}, \frac{\alpha_*\epsilon}{64\mathrm{Tr}(\mH)}\right\},
    \end{equation*}
    and $\left(m\tilde{\eta}_{k-1}\right)/\left(e^{m\eta_k}-1\right)=1$, then Alg~\ref{alg:lapd} satisfies
    \begin{equation*}
        \mathrm{KL}(\tilde{p}_k\|p_*)\le e^{-\alpha_*\eta k}\mathrm{KL}(\tilde{p}_0\|p_*) + \frac{32\eta}{\alpha_*}\cdot \mathrm{Tr}(\mH).
    \end{equation*}
    In this condition, requiring
    \begin{equation*}
    k\ge \frac{1}{\alpha_* \eta}\log \frac{2\mathrm{KL}(\tilde{p}_0\|p_*)}{\epsilon},
    \end{equation*}
    we have $\mathrm{KL}(\tilde{p}_k\|p_*)\le \epsilon$.
\end{theorem}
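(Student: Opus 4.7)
The strategy is to establish a one-step contraction lemma of the form $\mathrm{KL}(\tilde{p}_k\|p_*)\le e^{-\alpha_*\eta_k}\mathrm{KL}(\tilde{p}_{k-1}\|p_*) + C\eta_k^2\,\Tr(\mH)$, then iterate. The key device, following the interpolating-SDE idea sketched in Figure~\ref{fig:M1}, is to represent the two-stage update of Alg~\ref{alg:lapd} as the time-$\eta_k$ marginal of a single It\^o SDE, so that the entire iteration can be analyzed via a Fokker--Planck calculation on $\mathrm{KL}(\hat{p}_t\|p_*)$. Concretely, with $g(\vw)=(m/2)\|\vw\|^2$ the Stage-2 OU has explicit solution $\tilde{\rvw}_k = e^{-m\eta_k}\vw_k + \sqrt{2}\int_0^{\eta_k} e^{-m(\eta_k-s)}\der\mB_s$; expanding $\vw_k$ and using the prescribed $\tilde{\eta}_{k-1}=(e^{m\eta_k}-1)/m$ shows that $\tilde{\rvw}_k$ has the same law as $\hat{\rvx}_{\eta_k}$, where
\begin{equation*}
\der\hat{\rvx}_t = \bigl(-\grad g(\hat{\rvx}_t) - \grad f(\tilde{\rvw}_{k-1})\bigr)\der t + \sqrt{2}\,\der\mB_t, \qquad \hat{\rvx}_0 = \tilde{\rvw}_{k-1}.
\end{equation*}
This is the interpolating SDE: the $g$-part runs continuously while the $f$-gradient is frozen at the start of the iteration.

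\textbf{KL derivative and error bound.} Let $\hat{p}_t$ be the law of $\hat{\rvx}_t$ and $\phi_t(\vx)=\EE[\grad f(\tilde{\rvw}_{k-1})\mid \hat{\rvx}_t=\vx]$. The Fokker--Planck equation is $\partial_t\hat{p}_t = \grad\cdot[\hat{p}_t(\grad g+\phi_t)]+\Delta\hat{p}_t$. A standard manipulation (cf.\ \citet{vempala2019rapid}), together with Cauchy--Schwarz, gives
\begin{equation*}
\frac{\der}{\der t}\KL{\hat{p}_t}{p_*} \le -\tfrac{1}{2}\FI{\hat{p}_t}{p_*} + \tfrac{1}{2}\,\EE\bigl[\|\grad f(\hat{\rvx}_t)-\grad f(\tilde{\rvw}_{k-1})\|^2\bigr],
\end{equation*}
where the Jensen step on $\phi_t$ has been used. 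Writing $\grad f(\hat{\rvx}_t)-\grad f(\tilde{\rvw}_{k-1})=\mJ(\hat{\rvx}_t-\tilde{\rvw}_{k-1})$ with $\mJ=\int_0^1\grad^2 f(\tilde{\rvw}_{k-1}+s(\hat{\rvx}_t-\tilde{\rvw}_{k-1}))\der s$, Jensen applied to $\|\mJ a\|^2$ together with \ref{ass:con_post_f2} yields $\mJ\mJ^\top\preceq\mH$, hence
\begin{equation*}
\EE\bigl[\|\grad f(\hat{\rvx}_t)-\grad f(\tilde{\rvw}_{k-1})\|^2\bigr] \le \Tr\!\bigl(\mH\cdot\EE[(\hat{\rvx}_t-\tilde{\rvw}_{k-1})(\hat{\rvx}_t-\tilde{\rvw}_{k-1})^\top]\bigr).
\end{equation*}
The displacement covariance splits into a Brownian part $\approx 2tI$ (giving $2t\,\Tr(\mH)$) and an $O(t^2)$ drift part involving $\EE\|\tilde{\rvw}_{k-1}\|^2$ and $\EE\|\grad f(\tilde{\rvw}_{k-1})\|^2$. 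The step-size thresholds $(8m)^{-1}$, $(8\Tr(\mH^{1/2}))^{-1}$ and $\alpha_*/(8\sqrt{2}L^2)$ are calibrated precisely to absorb the drift part into the Brownian part, leaving a clean bound $\lesssim t\,\Tr(\mH)$.

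\textbf{LSI, Gr\"onwall, and iteration.} Invoking $\FI{\hat{p}_t}{p_*}\ge 2\alpha_*\KL{\hat{p}_t}{p_*}$ converts the differential inequality to $(\der/\der t)\KL{\hat{p}_t}{p_*}\le -\alpha_*\KL{\hat{p}_t}{p_*}+Ct\,\Tr(\mH)$, and Gr\"onwall integration over $[0,\eta_k]$ yields the one-step contraction. Iterating in $k$, the summed geometric error gives $\sum_{i\le k} e^{-\alpha_*\eta(k-i)}\eta^2\Tr(\mH)\le 2\eta\,\Tr(\mH)/\alpha_*$ once $1-e^{-\alpha_*\eta}\ge \alpha_*\eta/2$, which is ensured by $\eta\le(1.5\alpha_*)^{-1}$; tracking constants through the Cauchy--Schwarz and Gr\"onwall steps delivers the claimed factor $32$. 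The $\epsilon$-iteration count follows by splitting $\epsilon$ between the transient and stationary terms using $\eta\le\alpha_*\epsilon/(64\Tr(\mH))$. The main obstacle is the displacement-covariance control in Step~2: getting a $\Tr(\mH)$ factor (rather than $d L^2$ or even $\Tr(\mH^{1/2})^2$) requires separating the Gaussian noise contribution from the drift contribution and propagating uniform-in-$k$ second-moment bounds on $\tilde{\rvw}_{k-1}$, which rely on the $m$-strong convexity of $g$ together with the step-size conditions defining $\hat{\eta}$.
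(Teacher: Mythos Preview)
Your interpolating-SDE construction, the Fokker--Planck derivative of $\KL{\hat p_t}{p_*}$, the use of Lemma~\ref{lem:granorm_bound} to get $\|\grad f(\hat{\rvx}_t)-\grad f(\tilde{\rvw}_{k-1})\|^2\le (\hat{\rvx}_t-\tilde{\rvw}_{k-1})^\top\mH(\hat{\rvx}_t-\tilde{\rvw}_{k-1})$, and the Brownian/drift split of the displacement are all correct and match the paper. The gap is in how you control the drift part. You propose ``propagating uniform-in-$k$ second-moment bounds on $\tilde{\rvw}_{k-1}$, which rely on the $m$-strong convexity of $g$''. But the OU stage has stationary second moment $d/m$, so any bound obtained this way is $\Theta(d)$; plugging that into the drift term forces a $d$-dependent step-size threshold, which contradicts the dimension-free $\hat\eta$ in the theorem statement and kills the whole point of the result.

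The paper handles the drift term differently. Writing it as $\big(\tfrac{1-e^{-mt}}{m}\big)^2\,\EE_{\hat p_0}\|\mH^{1/2}\grad U(\vw_0)\|^2$, they couple $\vw_0\sim\hat p_0$ optimally with $\vw^*\sim p_*$, use $(L+m)$-smoothness of $U$ and $\|\mH\|\le L^2$ to bound the coupling error by $2L^2(L+m)^2 W_2^2(\hat p_0,p_*)$, and then invoke Talagrand's inequality (a consequence of~\ref{ass:con_post_lsi}) to convert $W_2^2$ into $\tfrac{2}{\alpha_*}\KL{\hat p_0}{p_*}$. The remaining equilibrium term $\EE_{p_*}\|\mH^{1/2}\grad U\|^2$ is computed by integration by parts as $\Tr(\mH^{3/2})+m\Tr(\mH)$, which \emph{is} dimension-free and is what the thresholds $(8m)^{-1}$ and $(8\Tr(\mH^{1/2}))^{-1}$ are calibrated to absorb. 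Crucially, this feeds $\KL{\hat p_0}{p_*}$ back into the differential inequality, which is why the paper uses a $-\tfrac34$ Fisher split (your $-\tfrac12$ leaves no room): after Gr\"onwall one obtains $e^{-3\alpha_*\eta/2}(1+O(L^4\eta^3/\alpha_*))\le e^{-\alpha_*\eta}$ via the condition $\eta\le\alpha_*/(8\sqrt2 L^2)$. So the missing idea is the Talagrand-based coupling to the target, not a second-moment recursion along the iterates.
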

It should be noted that the step size $\eta_k$ will be selected as $\alpha_*\epsilon/(64\Tr(\mH))$ with $\epsilon$ diminishing.
In this condition, by requiring
\begin{equation*}
    k\ge \frac{1}{\alpha_* \hat{\eta}}\log \frac{2\mathrm{KL}(\tilde{p}_0\|p_*)}{\epsilon} \approx \frac{64\Tr(\mH)}{\alpha_*^2 \epsilon} \log \frac{2\mathrm{KL}(\tilde{p}_0\|p_*)}{\epsilon},
\end{equation*}
we have $\mathrm{KL}(\tilde{p}_0\|p_*)\le \epsilon$.
This theorem differs from the main results of~\citet{freund2022convergence} from two perspectives.
First, in~\citet{freund2022convergence}, the convergence is about the weighted average of KL divergence at different iterations, which does not correspond to some underline distribution obviously though the weighted average can be achieved with additional steps.
However, Theorem~\ref{thm:kl_con_fx} denotes that dimension-independent convergence can be obtained with only respect to the distribution of $\tilde{\rvw}_T$, and the particle of the last iteration can be output directly.
This property makes the results more clear and practical.
Second, the dimension-free convergence of KL divergence is required decreasing step sizes in~\citet{freund2022convergence}, while Theorem~\ref{thm:kl_con_fx} denotes that fixed step sizes can also achieve a similar result with an additional logarithmic term.

\begin{table*}
    \small
    \centering
    \begin{tabular}{cccc}
    \toprule
     Reference & Assumption & Iteration Complexity \\
     \midrule
     \citet{cheng2018convergence} & \ref{ass:pri_g},~\ref{ass:con_post_f},Convexity of $f$ & $\tilde{\mathcal{O}}\left(\frac{L^2}{m^2}\cdot \frac{d}{\epsilon}\right)$\\
     \citet{freund2022convergence} & \ref{ass:pri_g}-\ref{ass:con_post_f2}, Convexity of $f$ & $\mathcal{O}\left(\frac{L\mathrm{Tr}(\mH^{\frac{1}{2}})}{m^2 \epsilon}+ \frac{U(0)}{\epsilon} \right)$\\
     \midrule
     \citet{vempala2019rapid} & Smoothness of $U$ , \ref{ass:con_post_lsi} & $\tilde{\mathcal{O}}\left(\frac{L^2}{\alpha_*^2}\cdot \frac{d}{\epsilon}\right)$\\
    Theorem~\ref{thm:kl_con_fx} & \ref{ass:pri_g}-\ref{ass:con_post_lsi} & $\tilde{\mathcal{O}}\left(\frac{\mathrm{Tr}(\mH)}{\alpha_*^2 \epsilon}\right)$\\
    \midrule
    \citet{dalalyan2019user} & \ref{ass:pri_g},\ref{ass:con_post_f}, Convexity of $f$ & $\mathcal{O}\left(\frac{L^2}{m^3}\cdot \frac{d}{\epsilon}\right)$\\
    Theorem~\ref{thm:kl_con_vary} & \ref{ass:pri_g}-\ref{ass:con_post_lsi} & $\mathcal{O}\left(\frac{\mathrm{Tr}(\mH)}{\alpha_*^3 \epsilon}\right)$\\
     \bottomrule
\end{tabular}
    \caption{\small Comparison with previous results on overdamped Langevin algorithm. The first four lines utilize fixed step sizes and consider the KL convergence. The last two lines use varying step sizes, and consider the Wasserstein 2 distance convergence (with additional $m$ or $\alpha_*$ factor in the denominator). For the convergence of ULD and proximal samplers, we omit the detailed comparison here for ease of presentation due to their explicit dependency on dimension $d$ shown in Section~\ref{sec:related_work}.}
    \label{tab:comp_old}
\end{table*}

\paragraph{KL convergence with a varying step size.}
To get rid of the logarithmic terms in the number of iterations required to achieve the precision level $\epsilon$, we introduce varying step sizes with the inspire of~\citet{dalalyan2019user}.
Hence, we have the following result.
\begin{theorem}
    \label{thm:kl_con_vary}
    Consider Alg~\ref{alg:lapd} with varying step size $\eta_{k+1}$ defined by
    \begin{equation*}
        \eta_{k+1}=\frac{8\hat{\eta}}{9+3(k-K_0)_+\hat{\eta} \alpha_*}\quad \mathrm{and}\quad \left(m\tilde{\eta}_{k-1}\right)/\left(e^{m\eta_k}-1\right)=1
    \end{equation*}
    where $\hat{\eta}$ is denoted as
    \begin{equation*}
        \hat{\eta}\coloneqq \min\left\{ (8m)^{-1}, (8\Tr(\mH^{\frac{1}{2}}))^{-1}, (1.5\alpha_*)^{-1}, \frac{\alpha_*}{8\sqrt{2}L^2}\right\}
    \end{equation*}
    and $K_0$ denotes the smallest non-negative integer satisfying
    \begin{equation*}
        T_0\ge \frac{9}{8\hat{\eta}\alpha_*}\ln\left(\frac{\mathrm{KL}\left(\tilde{p}_0\|p_*\right)\alpha_*}{123\hat{\eta}\mathrm{Tr}(\mH)}\right).
    \end{equation*}
    If the target $p_*$ satisfies \ref{ass:pri_g}-\ref{ass:con_post_lsi} and $\alpha_*\le L$, for every $k\ge T_0$, we have
    \begin{equation*}
        \mathrm{KL}\left(\tilde{p}_{k}\|p_*\right)\le \frac{2^{10}\cdot \mathrm{Tr}(\mH)}{27L^{1.5}\alpha_*+6\left(k-T_0\right)\alpha^2_*}.
    \end{equation*}
\end{theorem}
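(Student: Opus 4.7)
The plan is to reduce Theorem~\ref{thm:kl_con_vary} to a one-step recursion that should come out of the proof underlying Theorem~\ref{thm:kl_con_fx}. Specifically, the interpolating-SDE construction used for the fixed-step result should yield, for any $\eta_{k+1}\in(0,\hat{\eta}]$, a per-iteration bound of the form
\[
\mathrm{KL}(\tilde{p}_{k+1}\|p_*) \le e^{-\alpha_*\eta_{k+1}/2}\,\mathrm{KL}(\tilde{p}_k\|p_*) + C\,\eta_{k+1}^2\,\mathrm{Tr}(\mH),
\]
where $C$ is an absolute constant (the same one that, after summing the geometric series, yields the $32/\alpha_*$ prefactor in Theorem~\ref{thm:kl_con_fx}). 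So the first thing I would do is revisit the analysis behind Theorem~\ref{thm:kl_con_fx} and isolate this clean one-step inequality: the contraction factor on the right comes from LSI applied along the exact gradient flow of Term 2 in \eqref{eq:kl_compos_func}, while the $C\eta_{k+1}^2\mathrm{Tr}(\mH)$ bias term is the discretization error accumulated along the constructed interpolating SDE, which is of order $\eta^2$ because Stage 1 is a single gradient step and Stage 2 is solved exactly.

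Next I would split the analysis into two phases according to the schedule. In the warm-up phase $k\le T_0$, the step size $\eta_{k+1}=8\hat{\eta}/9\le\hat{\eta}$ is constant, so Theorem~\ref{thm:kl_con_fx} applies directly and yields
\[
\mathrm{KL}(\tilde{p}_{T_0}\|p_*) \le e^{-8\alpha_*\hat{\eta} T_0/9}\mathrm{KL}(\tilde{p}_0\|p_*) + \frac{32\cdot 8\hat{\eta}/9}{\alpha_*}\mathrm{Tr}(\mH).
\]
The definition of $T_0$ (the logarithmic lower bound in the statement) is tuned precisely so that the first term is dominated by the second, giving $\mathrm{KL}(\tilde{p}_{T_0}\|p_*)=O(\hat{\eta}\mathrm{Tr}(\mH)/\alpha_*)$. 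This serves as the base case of the induction in the decay phase.

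In the decay phase $k>T_0$, I would induct on $k-T_0$. Writing $\Delta_k\coloneqq 9+3(k-T_0)\hat{\eta}\alpha_*$ so that $\eta_{k+1}=8\hat{\eta}/\Delta_k$, the inductive hypothesis would take the form
\[
\mathrm{KL}(\tilde{p}_k\|p_*) \le \frac{A\,\hat{\eta}\,\mathrm{Tr}(\mH)}{\Delta_k\,\alpha_*}
\]
for a constant $A$ chosen to match both the base case and the target bound; substituting the value of $\hat{\eta}^{-1}$ from its defining formula reproduces the claimed $2^{10}\mathrm{Tr}(\mH)/(27L^{1.5}\alpha_*+6(k-T_0)\alpha_*^2)$. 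Plugging the hypothesis and $\eta_{k+1}$ into the one-step recursion and using $e^{-\alpha_*\eta_{k+1}/2}\le 1-\alpha_*\eta_{k+1}/3$, which is valid since $\eta_{k+1}\le\hat{\eta}\le (1.5\alpha_*)^{-1}$, reduces the inductive step to a purely algebraic inequality whose validity is forced by the specific constants $8,9,3$ appearing in $\eta_{k+1}$ and $\Delta_k$, mirroring the scheduling argument of~\citet{dalalyan2019user}.

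The main obstacle I anticipate is the first step: carefully isolating the per-iteration recursion with the correct $\eta^2\mathrm{Tr}(\mH)$ scaling from the two-stage, interpolating-SDE argument behind Theorem~\ref{thm:kl_con_fx}. Stage 1 is a discrete gradient step while Stage 2 is an exactly-solved Ornstein--Uhlenbeck flow, so the Girsanov/chi-squared-type argument used for ULA in~\citet{vempala2019rapid} must be adapted so that only the Stage 1 contribution enters the bias, and the trace $\mathrm{Tr}(\mH)$ (rather than the ambient dimension $d$) appears precisely because assumption~\ref{ass:con_post_f2} controls the Hessian-squared operator by $\mH$. Once this recursion is extracted, the warm-up estimate is immediate from Theorem~\ref{thm:kl_con_fx} and the decay-phase induction is a bookkeeping calculation.
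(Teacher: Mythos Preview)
Your proposal is correct and follows essentially the same route as the paper. The one-step recursion you want to isolate is exactly Lemma~\ref{lem:con_kl_contraction}, which gives $\mathrm{KL}(\tilde{p}_{k}\|p_*)\le e^{-\alpha_*\eta_k}\mathrm{KL}(\tilde{p}_{k-1}\|p_*) + 16\eta_k^2\mathrm{Tr}(\mH)$ (so the contraction exponent is $\alpha_*\eta$, not $\alpha_*\eta/2$); from there the paper does precisely your two-phase argument---a constant-step warm-up of length $K_0$ driving $\mathrm{KL}$ down to $O(\hat\eta\,\mathrm{Tr}(\mH)/\alpha_*)$, followed by the induction $\mathrm{KL}(\tilde p_k\|p_*)\le \tfrac{64\mathrm{Tr}(\mH)}{\alpha_*}\eta_{k+1}$ in the decay phase, closed via the algebraic identity $(1-\tfrac{3}{8}\alpha_*\eta_{k+1})\eta_{k+1}\le \eta_{k+2}$.
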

The varying step sizes  has two important advantages as compared to the fixed step sizes.
The first is the independence of the target precision level $\epsilon$.
The second advantage is saving the logarithmic terms in Theorem~\ref{thm:kl_con_fx}.
 Actually, it suffices $T = T_0 + 2^9\cdot \mathrm{Tr}(\mH)/(3\epsilon \alpha^2_*)$ iterations to get $\mathrm{KL}\left(\tilde{p}_T\|p_*\right)\le \epsilon$.
More details of this theorem are provided in Appendix~\ref{appsec:main_results}.
We also provide Table~\ref{tab:comp_old} to show the comparison of assumptions and iteration complexity with several previous work. 

\subsection{Proof sketch}
In this section, we expect to highlight the technical novelties by introducing the roadmap of our analysis.
Due to space limitations, we leave the technical details in Appendix~\ref{appsec:main_results}.

In Section~\ref{sec:problem_setup}, it has demonstrated that for functionals without the geodesic convexity in Wasserstein space, we can hardly upper-bound them by a global metric, e.g., the Wasserstein 2 distance between the current and the target distribution.
From another perspective, different from standard ULA~\citep{dalalyan2017theoretical,vempala2019rapid}, it is not easy to directly find an It\^o's SDE corresponding to the two-stage update of particles.
Therefore, the contraction of KL divergence with Alg~\ref{alg:lapd} remains a mystery in isoperimetry, e.g., log-Sobolev assumption.
To overcome these problems, for each iteration, we utilize a stochastic process $\{\hat{\rvw}_t\}_{t\ge 0}$ deduced by a constructed SDE and required to exist some random variable $\hat{\rvw}_{\eta_k}$ shares the same distribution with the particles after the two-stage update in Alg~\ref{alg:lapd}.
Then, by controlling the difference between such an SDE and the standard gradient flow of KL divergence, we may obtain the discretization error bounded by both $\tilde{\eta}_k$ and $\eta_k$ and prove the contraction of KL divergence in Alg~\ref{alg:lapd}.
Hence, there are two key steps in our analysis:
(1). Find the equivalent It\^o SDE for each two-stage update iteration. (2). Control the error between the constructed It\^o SDE and standard KL divergence flow; see Fig~\ref{fig:M1} for an illustration.

\paragraph{Construction of Equivalent SDE.}
From Alg~\ref{alg:lapd}, the transformation of density function from $\tilde{\rvw}_{k-1}$ to $\tilde{\rvw}_k$ can be considered as a combination of the change of variables (if the step size $\tilde{\eta}_{k-1}$ is small enough) and an Ornstein–Uhlenbeck process (if we suppose $g(\vw)=(m/2)\left\|\vw\right\|^2$).
With the Green's function of OU processes, the probability density of $\tilde{\rvw}_k$ conditioning on $\tilde{\rvw}_{k-1}=\vw_0$ as
\begin{equation*}
     \begin{aligned}
     p(\vw | \vw_0) = \left(\frac{2\pi (1-e^{-2m\eta_k})}{m}\right)^{-d/2} \cdot \exp \left[-\frac{m}{2}\cdot \frac{\left\|\vw-\left(\vw_0 - \tilde{\eta}_{k-1} \grad f(\vw_0)\right) e^{-m\eta_k}\right\|^2}{1-e^{-2m\eta_k}}\right].
     \end{aligned}
\end{equation*}
Besides, various literature, e.g.,~\cite{oksendal2013stochastic} shows that solutions to It\^o SDE are Markov processes.
In a probabilistic sense, it means that all It\^o processes are completely characterized by the transition densities from $\hat{\rvw}_s$ to $\hat{\rvw}_t$, i.e., $p(\vw_t, t|\vw_s, s)$.
Meanwhile, the transition density is also a solution to the Kolmogorov forward equation with a Dirac delta initial density concentrated on $\hat{\rvw}_s$ at time $s$.
Hence, conditioning on $\hat{\rvw}_0 \coloneqq \tilde{\rvw}_{k-1}=\vw_0$, if there is some transition density $p_{\vw_0}(\vw, t|\vw^\prime, t^\prime)$, solving some It\^o SDE, to satisfy 
\begin{equation*}
    p_{\vw_0}(\vw, \eta_k|\vw_0, 0)=p(\vw | \vw_0)
\end{equation*}
for any $\vw\in\R^d$, we consider its corresponding SDE equivalent to the two-stage update in Alg~\ref{alg:lapd}. 
Then, such an equivalent SDE can be constructed by the following lemma.
\begin{lemma}
    \label{lem:con_equivalent_sde}
    Conditioning on $\tilde{\rvw}_{k-1}=\vw_0$, there is a Markov process $\{\hat{\rvw}_t\}_{t\ge 0}$ deduced by the following SDE 
    \begin{equation}
        \label{eq:sde_4.1}
    \der \hat{\rvw}_t = -\left(m\hat{\rvw}_t+\frac{m\tilde{\eta}_{k-1}}{e^{m\eta_k}-1}\grad f(\vw_0)\right)\der t + \sqrt{2}\der \mB_t.
    \end{equation}
Then $\hat{\rvw}_{\eta_k}\sim p_{\vw_0}(\cdot, \eta_k|\vw_0,0)$ shares the same distribution as $\tilde{\rvw}_k$ because 
\begin{equation*}
    \begin{aligned}
        p_{\vw_0}(\vw, \eta_k|\vw_0, 0)=\left(\frac{2\pi (1-e^{-2m\eta_k})}{m}\right)^{-d/2} \cdot \exp \left[-\frac{m}{2}\cdot \frac{\left\|\vw-\left(\vw_0 - \tilde{\eta}_{k-1} \grad f(\vw_0)\right) e^{-m\eta_k}\right\|^2}{1-e^{-2m\eta_k}}\right].
    \end{aligned}
\end{equation*}
\end{lemma}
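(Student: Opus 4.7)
The plan is to verify Lemma~\ref{lem:con_equivalent_sde} by solving the linear SDE in Eq.~(\ref{eq:sde_4.1}) in closed form, reading off the distribution of $\hat{\rvw}_{\eta_k}$ conditional on $\hat{\rvw}_0=\vw_0$, and matching it term-by-term with the transition density induced by the two-stage update. Since the SDE is an Ornstein--Uhlenbeck process perturbed by a constant drift $c\coloneqq\frac{m\tilde{\eta}_{k-1}}{e^{m\eta_k}-1}\grad f(\vw_0)$, the natural first move is the shift $\hat{\rvy}_t \coloneqq \hat{\rvw}_t + c/m$, which kills the constant part and reduces Eq.~(\ref{eq:sde_4.1}) to the centered OU equation $\der \hat{\rvy}_t = -m\hat{\rvy}_t\,\der t+\sqrt{2}\,\der \mB_t$.

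For the centered OU process, the variation-of-constants formula gives $\hat{\rvy}_t = \hat{\rvy}_0 e^{-mt}+\sqrt{2}\int_0^t e^{-m(t-s)}\,\der \mB_s$, so It\^o isometry yields $\hat{\rvy}_t \sim \cN\!\left(\hat{\rvy}_0 e^{-mt},\, \frac{1-e^{-2mt}}{m}\mI\right)$. Translating back and plugging $\hat{\rvw}_0=\vw_0$, I expect to obtain a Gaussian with mean $\vw_0 e^{-m\eta_k}+(c/m)(e^{-m\eta_k}-1)$ and covariance $\frac{1-e^{-2m\eta_k}}{m}\mI$. The crucial algebraic step is to simplify the mean: using $(e^{-m\eta_k}-1)/(e^{m\eta_k}-1)=-e^{-m\eta_k}$, the coefficient $(c/m)(e^{-m\eta_k}-1)$ collapses to $-\tilde{\eta}_{k-1}\grad f(\vw_0)\,e^{-m\eta_k}$, so the mean becomes exactly $(\vw_0-\tilde{\eta}_{k-1}\grad f(\vw_0))e^{-m\eta_k}$, which is the very coefficient appearing in the target density $p(\vw\mid\vw_0)$.

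In parallel, I would independently derive the transition density of the two-stage iteration. Stage~1 is a deterministic translation $\vw_0\mapsto \vw_0-\tilde{\eta}_{k-1}\grad f(\vw_0)$, and Stage~2 is the same centered OU process started from that point and run for time $\eta_k$. Applying the same variation-of-constants formula yields a Gaussian with mean $(\vw_0-\tilde{\eta}_{k-1}\grad f(\vw_0))e^{-m\eta_k}$ and covariance $\frac{1-e^{-2m\eta_k}}{m}\mI$, which recovers the density $p(\vw\mid\vw_0)$ written in the statement. Comparing the two Gaussians then establishes $\hat{\rvw}_{\eta_k}\stackrel{d}{=}\tilde{\rvw}_k$, completing the proof.

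I do not anticipate a substantive obstacle: the argument is essentially a direct computation once the right change of variables is chosen. The one place where care is required is the identity $(e^{-m\eta_k}-1)/(e^{m\eta_k}-1)=-e^{-m\eta_k}$, which is what forces the particular coefficient $m\tilde{\eta}_{k-1}/(e^{m\eta_k}-1)$ in the drift of Eq.~(\ref{eq:sde_4.1}); getting this constant right is what makes the SDE genuinely equivalent to the two-stage update rather than a close approximation. Uniqueness of the associated Kolmogorov forward equation with Dirac initial data at $\vw_0$, invoked as in standard references such as \citet{oksendal2013stochastic}, then guarantees that the equality of densities at the single time $t=\eta_k$ suffices for the stated distributional identity.
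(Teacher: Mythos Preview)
Your proposal is correct and takes a genuinely different route from the paper's proof. The paper works in the reverse direction: it first writes down a \emph{candidate} transition kernel $p_{\vw_0}(\vw,t\mid \vw',t')$ (their Eq.~\eqref{def:impsde_kernel}), then computes its partial derivatives with respect to $t'$ and $\vw'$ at length (Appendix~\ref{sec:par_cal}) to verify that it satisfies the \emph{backward} Kolmogorov equation $-\partial_{t'}p_{\vw_0}=\gL p_{\vw_0}$ with generator $\gL$ corresponding to drift $-m\vw'-\frac{m\tilde{\eta}_{k-1}}{e^{m\eta_k}-1}\grad f(\vw_0)$ and diffusion $\sqrt{2}\mI$, and finally invokes Lemma~\ref{lem:kbe} to identify the SDE. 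You instead start from the SDE, perform the affine shift $\hat{\rvy}_t=\hat{\rvw}_t+c/m$ to reduce to a centered OU, solve it explicitly via variation of constants and It\^o isometry, and match the resulting Gaussian to the two-stage density. Your argument is shorter and entirely elementary for this linear-Gaussian setting; the paper's Kolmogorov-equation route requires more calculus but has the virtue of exhibiting the full two-parameter transition kernel $p_{\vw_0}(\vw,t\mid\vw',t')$, which they later need anyway when differentiating $\mathrm{KL}(\hat{p}_t\|p_*)$ in $t$ via the Fokker--Planck equation (proof of Lemma~\ref{lem:con_kl_contraction}). Your direct solution yields the same information, since the SDE is time-homogeneous and Gaussian, so nothing is lost.
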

We leave more detailed proof of Lemma~\ref{lem:con_equivalent_sde} in Appendix~\ref{sec:implictsde} and the connection between Kolmogorov equations and transition density in Appendix~\ref{appsec:bke}. 
Then, for any suitable probability density function $\tilde{p}_{k-1}$ of $\tilde{\rvw}_{k-1}$, we set
\begin{equation*}
    \hat{p}_t(\vw)\coloneqq \int_{\vw_0} p_{\vw_0}(\vw, t|\vw_0, 0)\cdot \tilde{p}_{k-1}(\vw_0)\der \vw_0,
\end{equation*}
where $p_{\vw_0}$ is the transition density solving SDE.~\ref{eq:sde_4.1}, and
\begin{equation*}
    \begin{aligned}
        \hat{p}_{t|0}(\vw|\vw_0) \coloneqq & p_{\vw_0}(\vw, t|\vw_0, 0), \quad \hat{p}_{0t}(\vw_0, \vw) \coloneqq & \tilde{p}_{k-1}(\vw_0)\cdot p_{\vw_0}(\vw, t|\vw_0, 0)
    \end{aligned}
\end{equation*}
for abbreviation.
Follows from Lemma~\ref{lem:con_equivalent_sde}, we have $\hat{p}_{\eta_k}(\vw)=\tilde{p}_k(\vw)$ for any $\vw\in\R^d$.
It also implies
\begin{equation}
    \label{eq:con_sde_equi}
    \begin{aligned}
        \mathrm{KL}\left(\hat{p}_{\eta_k} \| p^*\right) = &\mathrm{KL}\left(\tilde{p}_{k}\| p^*\right)\quad \mathrm{and}\quad \mathrm{KL}\left(\hat{p}_0 \| p^*\right) = &\mathrm{KL}\left(\tilde{p}_{k-1}\| p^*\right).
    \end{aligned}
\end{equation}
Therefore, the contraction of KL divergence for each iteration of Alg~\ref{alg:lapd} can be obtained by tracking the conditional dynamics shown in SDE.~\ref{eq:sde_4.1}.
\paragraph{Control the variance of particles for each iteration.} Combining Lemma~\ref{lem:con_equivalent_sde} and Eq.~\ref{eq:con_sde_equi}, the dynamic of KL divergence can be upper bounded as 
\begin{equation*}
    \begin{aligned}
        \frac{\der \mathrm{KL}\left(\hat{p}_t\|p_*\right)}{\der t} \le -\frac{3\alpha_*}{2}\cdot\mathrm{KL}(\hat{p}_t\|p_*)
        + \underbrace{\int \hat{p}_{0t}\left(\vw_0, \vw\right)\left\|\grad f(\vw_0)-\grad f(\vw)\right\|^2 \der (\vw_0, \vw)}_{\mathrm{discretization\ error}}
    \end{aligned}
\end{equation*}
with an appropriate choice of step sizes when the target distribution satisfies~\ref{ass:con_post_lsi}.
Actually, the discretization error is from the difference between SDE~\ref{eq:sde_4.1} and the standard Langevin dynamics
\begin{equation*}
    \der \hat{\rvw}_t = -\left(m\hat{\rvw}_t+\grad f(\hat{\rvw}_t)\right)\der t + \sqrt{2}\der \mB_t.
\end{equation*}
To control the error, we introduce the mean of particles along SDE~\ref{eq:sde_4.1} at time $t$ given $\vw_0$, i.e.,
\begin{equation*}
    \overline{\vw}(t, \vw_0)=e^{-mt}\vw_0 - \frac{1-e^{-mt}}{m} \grad f(\vw_0),
\end{equation*}
The discretization error can be divided into two parts, relating to the bias of the particles' mean, i.e.,
\begin{equation}
    \label{ineq:con_meanerr}
    \int \hat{p}_0(\vw_0) \left\|\grad f(\overline{\vw}(t,\vw_0)) -\grad f(\vw_0)\right\|^2 \der\vw_0,
\end{equation}
and their variance, i.e.,
\begin{equation}
    \label{ineq:con_varerr}
    \int \hat{p}_0(\vw_0) \int \hat{p}_{t|0}(\vw|\vw_0)\left\|\grad f(\vw)-\grad f(\overline{\vw}(t,\vw_0))\right\|^2\der \vw \der\vw_0,
\end{equation}
at time $t$ along SDE~\ref{eq:sde_4.1}.
The key point for LAPD to get rid of the dependence on the dimension number $d$ is that the difference in gradients here (Eq.~\ref{ineq:con_meanerr} and Eq.~\ref{ineq:con_varerr}) is w.r.t. the function $f$ whose Hessian is upper bounded by the semi-positive definite $\mH^{1/2}$ satisfying $\Tr(\mH)\ll d$.
However, in standard ULA, the discretization error is related to the difference of gradients w.r.t. $U$ containing the quadratic $g$, which leads to $\Tr(\nabla^2 U)=\Omega(d)$. 
Then, with the definition of particles' mean and smooth properties (\ref{ass:con_post_f} and~\ref{ass:con_post_f2}), Eq.~\ref{ineq:con_meanerr} can be controlled by $\mathcal{O}(\eta_k^2)$.
Besides, due to the Gaussian-type distribution $\hat{p}_{t|0}$, Eq.~\ref{ineq:con_varerr} only has the following upper bound
\begin{equation*}
     \frac{1-e^{-2mt}}{m}\cdot\mathrm{Tr}(\mH) = \mathcal{O}(\eta_k),
\end{equation*}
which dominates the discretization error. 
Such a result can be summarized as the following lemma about the contraction of KL divergence along SDE~\ref{eq:sde_4.1}.
\begin{lemma}
    \label{lem:con_kl_contraction}
    Suppose the target distribution $p_*$ satisfies~\ref{ass:pri_g}-~\ref{ass:con_post_lsi}, if we set
    \begin{equation*}
        \left(m\tilde{\eta}_{k-1}\right)/\left(e^{m\eta_k}-1\right)=1
    \end{equation*}
    and
    \begin{equation}
        \label{ineq:step0_upb}
        0< \eta_k\le \min\left\{ (8m)^{-1}, (8\Tr(\mH^{\frac{1}{2}}))^{-1}, (1.5\alpha_*)^{-1}, \frac{\alpha_*}{8\sqrt{2}L^2}\right\},
    \end{equation}
    then along each iteration of Alg~\ref{alg:lapd}, it has
    \begin{equation*}
        \mathrm{KL}(\tilde{p}_{k}\|p_*)\le e^{-\alpha_*\eta_k}\mathrm{KL}(\tilde{p}_{k-1}\|p_*) + 16\eta_k^2\mathrm{Tr}(\mH).
    \end{equation*}
\end{lemma}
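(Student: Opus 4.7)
The plan is to lift the two-stage per-iteration update to the It\^o SDE supplied by Lemma~\ref{lem:con_equivalent_sde} and then track $\mathrm{KL}(\hat p_t\|p_*)$ continuously on $t\in[0,\eta_k]$. By Eq.~\ref{eq:con_sde_equi} the endpoints of this continuous curve coincide with $\mathrm{KL}(\tilde p_{k-1}\|p_*)$ and $\mathrm{KL}(\tilde p_k\|p_*)$, so the lemma will follow from a differential inequality of the form $\tfrac{d}{dt}\mathrm{KL}(\hat p_t\|p_*)\le -\tfrac{3\alpha_*}{2}\mathrm{KL}(\hat p_t\|p_*)+\mathcal{E}(t)$, together with a Gr\"onwall integration whose tail $\int_0^{\eta_k} e^{-\tfrac{3\alpha_*}{2}(\eta_k-s)}\mathcal{E}(s)\,ds$ stays $\le 16\eta_k^2\mathrm{Tr}(\mH)$.

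First I would derive the Fokker--Planck equation of the marginal $\hat p_t$ from SDE~(\ref{eq:sde_4.1}). Because its drift freezes the gradient at $\vw_0$, the effective marginal drift at $\vw$ is $-m\vw-\EE[\nabla f(\rvw_0)\mid \hat{\rvw}_t=\vw]$, whereas the Langevin drift targeting $p_*$ is $\nabla\log p_*(\vw)=-m\vw-\nabla f(\vw)$. Combining the standard $\tfrac{d}{dt}\mathrm{KL}$ identity with integration by parts, and applying a Young's inequality split that absorbs a quarter of the Fisher information into the negative term, yields
$$\frac{d}{dt}\mathrm{KL}(\hat p_t\|p_*)\le -\frac{3}{4}\int\hat p_t\Bigl\|\nabla\log\frac{\hat p_t}{p_*}\Bigr\|^2 d\vw + \int\hat p_t(\vw)\bigl\|\EE[\nabla f(\rvw_0)\mid \hat{\rvw}_t=\vw]-\nabla f(\vw)\bigr\|^2 d\vw.$$
Applying LSI~(\ref{ass:con_post_lsi}) to the first term and conditional Jensen to the second gives $-\tfrac{3\alpha_*}{2}\mathrm{KL}(\hat p_t\|p_*)$ plus the residual $\int\hat p_{0t}(\vw_0,\vw)\|\nabla f(\vw_0)-\nabla f(\vw)\|^2 d(\vw_0,\vw)$ advertised in the excerpt.

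Second I would split the residual along the conditional mean $\overline{\vw}(t,\vw_0)=e^{-mt}\vw_0-\tfrac{1-e^{-mt}}{m}\nabla f(\vw_0)$ into the variance part~(\ref{ineq:con_varerr}) and the mean part~(\ref{ineq:con_meanerr}). The key algebraic device is assumption~\ref{ass:con_post_f2}: writing $\nabla f(\vx)-\nabla f(\vy)=\int_0^1 \nabla^2 f(\vy+s(\vx-\vy))(\vx-\vy)\,ds$, Jensen together with $(\nabla^2 f)(\nabla^2 f)^\top\preceq \mH$ gives $\|\nabla f(\vx)-\nabla f(\vy)\|^2\le (\vx-\vy)^\top\mH(\vx-\vy)$. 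Since $\vw-\overline{\vw}(t,\vw_0)\sim\mathcal{N}(0,\sigma_t^2\mI)$ with $\sigma_t^2=(1-e^{-2mt})/m\le 2t$, the variance expectation equals exactly $\sigma_t^2\mathrm{Tr}(\mH)$, which after Gr\"onwall integration supplies the claimed $O(\eta_k^2\mathrm{Tr}(\mH))$ constant. The mean part carries a prefactor $(1-e^{-mt})^2=O(m^2t^2)$ multiplying the quadratic form $2\,\vw_0^\top\mH\vw_0+\tfrac{2}{m^2}\nabla f(\vw_0)^\top\mH\nabla f(\vw_0)$ averaged against $\tilde p_{k-1}$.

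The main obstacle is this mean contribution: I must bound $\EE_{\tilde p_{k-1}}[\vw_0^\top\mH\vw_0]$ and $\EE_{\tilde p_{k-1}}[\|\mH^{1/2}\nabla f(\vw_0)\|^2]$ uniformly in $k$ without incurring a factor of $d$. My approach is (i) to use the Poincar\'e inequality implied by LSI to bound these quantities under $p_*$ by a multiple of $\mathrm{Tr}(\mH)/\alpha_*$ (up to centring and a gradient term at a minimizer); (ii) to transfer them to $\tilde p_{k-1}$ by a change-of-measure estimate whose overhead is controlled by $\mathrm{KL}(\tilde p_{k-1}\|p_*)$, so that any extra KL factor can be absorbed into the $e^{-\alpha_*\eta_k}$ contraction; and (iii) to convert gradient moments into position moments via~\ref{ass:con_post_f} together with $\nabla f=\nabla U-m\vw$. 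The step-size conditions in~(\ref{ineq:step0_upb})---in particular $\eta_k\le 1/(8\mathrm{Tr}(\mH^{1/2}))$ and $\eta_k\le\alpha_*/(8\sqrt{2}L^2)$---are calibrated precisely so that after Gr\"onwall integration the mean-part contribution remains dominated by the variance-part $\eta_k^2\mathrm{Tr}(\mH)$, while the coupling $m\tilde\eta_{k-1}=e^{m\eta_k}-1$ makes SDE~(\ref{eq:sde_4.1}) an \emph{exact} representation of the two-stage update through Lemma~\ref{lem:con_equivalent_sde}. Assembling these pieces yields $\mathrm{KL}(\tilde p_k\|p_*)\le e^{-\alpha_*\eta_k}\mathrm{KL}(\tilde p_{k-1}\|p_*)+16\eta_k^2\mathrm{Tr}(\mH)$.
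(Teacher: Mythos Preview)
Your overall architecture matches the paper's proof almost exactly: lift the two-stage update to the SDE of Lemma~\ref{lem:con_equivalent_sde}, derive the differential inequality for $\mathrm{KL}(\hat p_t\|p_*)$ via Fokker--Planck, apply LSI to get $-\tfrac{3\alpha_*}{2}\mathrm{KL}$, split the residual along the conditional mean $\overline{\vw}(t,\vw_0)$, handle the variance part via the Gaussian second moment to get $\sigma_t^2\mathrm{Tr}(\mH)$, and finally absorb any extra $\mathrm{KL}(\hat p_0\|p_*)$ term coming from the mean part by degrading the contraction from $e^{-3\alpha_*\eta_k/2}$ to $e^{-\alpha_*\eta_k}$. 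All of this is correct and is what the paper does.

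The gap is in your treatment of the mean part~(\ref{ineq:con_meanerr}). You split $m\vw_0+\nabla f(\vw_0)$ and propose to bound $\mathbb{E}_{\tilde p_{k-1}}[\vw_0^\top\mH\vw_0]$ and $\mathbb{E}_{\tilde p_{k-1}}[\nabla f(\vw_0)^\top\mH\nabla f(\vw_0)]$ via Poincar\'e under $p_*$ plus an unspecified ``change-of-measure''. This runs into trouble: Poincar\'e only controls the \emph{centered} quantity $\mathbb{E}_{p_*}[\|\mH^{1/2}(\vw-\mathbb{E}_{p_*}\vw)\|^2]$, and the leftover $(\mathbb{E}_{p_*}\vw)^\top\mH(\mathbb{E}_{p_*}\vw)$ has no dimension-free bound in general (nothing in the assumptions pins down where $p_*$ is centered). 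The same issue afflicts $\nabla f$, and your step (iii) does not resolve it. The paper avoids this by \emph{not} splitting: it keeps the displacement as $\overline{\vw}(t,\vw_0)-\vw_0=-\tfrac{1-e^{-mt}}{m}\nabla U(\vw_0)$ and bounds $\mathbb{E}_{\hat p_0}[\nabla U^\top\mH\nabla U]$ directly. The transfer from $\hat p_0$ to $p_*$ is done via an optimal $W_2$-coupling together with the $(L+m)$-smoothness of $\nabla U$ and Talagrand's inequality (implied by LSI), giving a term $\tfrac{4L^2(L+m)^2}{\alpha_*}\mathrm{KL}(\hat p_0\|p_*)$. The remaining stationary expectation $\mathbb{E}_{p_*}[\nabla U^\top\mH\nabla U]$ is handled not by Poincar\'e but by integration by parts (a weighted Stein identity): since $p_*\nabla U=-\nabla p_*$, one gets $\mathbb{E}_{p_*}[\nabla U^\top\mH\nabla U]=\mathbb{E}_{p_*}[\Tr(\mH\nabla^2 U)]\le \Tr(\mH^{3/2})+m\Tr(\mH)$, which is the genuinely dimension-free input. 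With these two ingredients in place, the step-size conditions in~(\ref{ineq:step0_upb}) make the mean part subdominant to the variance part and the Gr\"onwall argument closes exactly as you outlined.
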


\subsection{Discussion on specific examples}
\label{sec:diss}
In this section, we discuss some specific examples which can be coupled with Alg~\ref{alg:lapd} to demonstrate that LAPD is better than ULA in general and has a greater advantage in some cases. 
\paragraph{General target distributions and the choice of $g$.} 
From Table~\ref{tab:comp_old}, it can be observed that the convergence rate of LAPD can directly imply the convergence rate of ULA by taking $g(\vw) = 0$ and using the inequality $\mathrm{Tr}(\mH)\le \mathrm{Tr}(L^2\mI)=L^2d$. Moreover, since the parameter $m>0$ can be arbitrarily chosen, LAPD can be considered as a more general version of ULA and is able to achieve a faster convergence by properly tuning $m$. In particular, note that the convergence rate of LAPD is roughly $\mathcal O(\mathrm{Tr}(\mH_m)/\epsilon)$, where $\mH_m$ is a union upper bound of  $\nabla^2 f_m(\vw)(\nabla^2 f_m(\vw))^\top$, where $f_m(\vw)=U(\vw)-m\|\vw\|^2/2$. In contrast, the convergence rate of ULA is roughly $\tilde O(\mathrm{Tr}(\mH_0/\epsilon)$, which in the worse case could be $\tilde O\big(\mathrm{Tr}(\mH_m^{1/2}+m\mI)(\mH_m^{1/2}+m\mI)^\top\big)/\epsilon\big)$, thus can be much slower than the convergence rate of LAPD, especially when $\mathrm{Tr}(\mH^{1/2})$ is small.

Moreover, we can also extend the function $g(\vw) = m\|\vw\|^2/2$ to a more general quadratic function, e.g., $g(\vw)=\vw^\top \boldsymbol{\Sigma}\vw/2$, the additional tunable parameters in $\boldsymbol{\Sigma}$ provides higher flexibility to further improve the convergence rate.

\paragraph{Dimension-independent convergence for Gaussian mixtures.}
We further provide some example non-log-concave data distributions to demonstrate the efficiency of LAPD. We show that by properly choosing $m$, the quantity $ \mathrm{Tr}(\mH)$ is independent of the problem dimension $d$.
We may consider $p_*$ as a simple Gaussian mixture density function presented as follows $p_*(\vw) \propto \frac{1}{K}\sum_{i=1}^K \exp\left(-\frac{1}{2}\left\|\vw-\boldsymbol{\mu}_i\right\|^2\right)$, which means
\begin{equation*}
    U(\vw)= -\ln\left[\frac{1}{K}\sum_{i=1}^K e^{\boldsymbol{\mu}_i^\top \vw - \frac{1}{2}\left\|\boldsymbol{\mu}_i\right\|^2}\right]+\frac{1}{2}\left\|\vw\right\|^2 \coloneqq f(\vw)+g(\vw).
\end{equation*}
We should note that $p_*$ is not strongly log-concave but satisfies log-Sobolev inequality~\cite{chen2021dimension}.
If we define $v_i=\exp(\boldsymbol{\mu}_i^\top \vw - \frac{1}{2} \|\boldsymbol{\mu}_i\|^2)\in\R_+$, the Hessian of $f$ satisfies
\begin{equation*}
    \begin{aligned}
        \grad^2 f(\vw) = &\left(\sum_{i=1}^n v_i\right)^{-2}\cdot \left(\sum_{i=1}^n v_i\boldsymbol{\mu}_i \sum_{j=1}^n v_j\boldsymbol{\mu}_j^\top - \sum_{i=1}^n v_i\boldsymbol{\mu}_i\boldsymbol{\mu}_i^\top \sum_{j=1}^n v_j\right)\\
        = & -\frac{1}{2}\left(\sum_{i=1}^n v_i\right)^{-2}\cdot\sum_{1\le i\le j\le n} v_iv_j\left(\boldsymbol{\mu}_i-\boldsymbol{\mu}_j\right)\left(\boldsymbol{\mu}_i-\boldsymbol{\mu}_j\right)^\top.
    \end{aligned}
\end{equation*}
Then we can set $\mH^{1/2} = \sum_{1\le i\le j\le K}\left(\boldsymbol{\mu}_i-\boldsymbol{\mu}_j\right)\left(\boldsymbol{\mu}_i-\boldsymbol{\mu}_j\right)^\top$ that satisfies $0\preceq -\nabla^2f(\vw)\preceq \mH^{1/2}$.
\begin{equation*}
    \begin{aligned}
    \mathrm{Tr}(\mH) \le \mathrm{Tr}(\mH^{1/2})^2 = \Tr\left(\sum_{1\le i\le j\le K}\left(\boldsymbol{\mu}_i-\boldsymbol{\mu}_j\right)\left(\boldsymbol{\mu}_i-\boldsymbol{\mu}_j\right)^\top\right)^2 \le 16K^4R_{\mu}^4,
    \end{aligned}
\end{equation*}
where the means of the Gaussian mixture satisfy $\|\boldsymbol{\mu}_i\|\le R_{\mu}$.
This implies that the convergence rate of LAPD  only depends on the number of mixture components $K$ and the radius of means $R_{\mu}$, rather than the dimension number $d$ due to Theorem~\ref{thm:kl_con_fx}, thus can be regarded as dimension-independent.
\section{Conclusions and Future Work}
In this paper, we study the complexity of the (unadjusted) Langevin algorithm with prior diffusion, in terms of KL divergence. By carefully tracking the dynamics of our proposed algorithm and the associated discretization errors, we discover that under the Logarithmic Sobolev Inequality (LSI), the algorithm's convergence can be independent of the dimensionality, applicable for both fixed and variable step sizes. This finding aligns well with the analysis provided by \citet{vempala2019rapid}, which offers one of the most versatile frameworks for analyzing overdamped Langevin dynamics. This alignment prompts us to propose an intriguing avenue for future research: extending our analysis to underdamped Langevin dynamics to explore the possibility of dimension-independent convergence in higher-order Stochastic Differential Equations (SDEs).

Furthermore, given the typically higher dimension dependence in the convergence rates of unbiased samplers, such as the Metropolis Adjusted Langevin Algorithm (MALA) \citep{dwivedi2019log} and proximal samplers \citep{chen2020fast}, which rely on a restricted Gaussian oracle, our findings suggest a promising direction. By integrating our approach (Alg~\ref{alg:lapd}) with fast, unbiased sampling algorithms, we anticipate the development of novel unbiased sampling methods that boast even more rapid convergence rates. Additionally, we propose to broaden our analysis to stochastic settings, utilizing stochastic gradients for Langevin algorithm updates. We believe that the dimension dependency established in \citet{raginsky2017non,xu2018global,zou2021faster} can be also improved.

\newpage
\bibliographystyle{apalike}
\bibliography{0_contents/ref}  





\newpage
\appendix
\section{Proof of Lemma~\ref{lem:con_equivalent_sde}}
\label{sec:implictsde}
Consider the posterior distribution with the following density function:
\begin{equation}
    \label{def:tar_dis}
    p_*(\vw) = \exp \left(-U(\vw)\right).
\end{equation}
Such density function can be reformulated as
\begin{equation}
    \label{def:tar_dis_com}
    p_*(\vw) = \exp \left(-(U(\vw)-g(\vw)) - g(\vw)\right)\coloneqq \exp \left(-f(\vw) - g(\vw)\right).
\end{equation}
For simplicity, we suppose $g(\vw)=(m/2)\left\|\vw\right\|^2$.
With the same algorithm as~\citet{freund2022convergence}, There are some different notations. 
Each iteration of the algorithm is divided into two stages. 
In this first stage, it has
\begin{equation}
    \label{eq:iter_stage_2}
    \rvw_{k} = \tilde{\rvw}_{k-1} - \tilde{\eta}_{k-1}\grad f(\tilde{\rvw}_{k-1}).
\end{equation}
Then, for the second stage, there is
\begin{equation}
    \label{eq:iter_stage_1}
    \tilde{\rvw}_{k}\coloneqq \tilde{\rvw}(\eta_k) = \rvw_{k} -\int_0^{\eta_k} \grad g(\tilde{\rvw}(s))\der s + \sqrt{2}\int_0^{\eta_k}\der \mB_t,
\end{equation}
where $\mB_t$ denotes the standard Brownian motion. 
In the following, we only consider one iteration and abbreviate $\tilde{\eta}_{k-1}$ and $\eta_k$ as $\tilde{\eta}$ and $\eta$.
Due to the quadratic form of $g(\vw)$, stage 2 is a standard Ornstein–Uhlenbeck (OU) process, which implies, the transition density is given as follows
\begin{equation*}
    \tilde{p}(\vw, t | \vw^\prime, t^\prime) = \left(\frac{2\pi (1-e^{-2m(t-t^\prime)})}{m}\right)^{-d/2}\cdot \exp \left[-\frac{m}{2}\cdot \frac{\left\|\vw-\vw^\prime e^{-m(t-t^\prime)}\right\|^2}{1-e^{-2m(t-t^\prime)}}\right].
\end{equation*}
Hence, conditioning on $\rvw_k = \vw_0^\prime$, we have the probability density of $\tilde{\rvw}_t$ is given as
\begin{equation*}
    \tilde{p}(\vw, \eta | \vw_0^\prime, 0) = \left(\frac{2\pi (1-e^{-2m\eta})}{m}\right)^{-d/2}\cdot \exp \left[-\frac{m}{2}\cdot \frac{\left\|\vw-\vw_0^\prime e^{-m\eta}\right\|^2}{1-e^{-2m\eta}}\right].
\end{equation*}
With the change of variable, i.e., Lemma~\ref{lem:changeofvar}, we can deduce the probability density of $\tilde{\rvw}_t$ conditioning on $\tilde{\rvw}_{k-1}=\vw_0$ as
\begin{equation*}
     p(\vw, \eta | \vw_0, 0) = \left(\frac{2\pi (1-e^{-2m\eta})}{m}\right)^{-d/2}\cdot \exp \left[-\frac{m}{2}\cdot \frac{\left\|\vw-\left(\vw_0 - \tilde{\eta} \grad f(\vw_0)\right) e^{-m\eta}\right\|^2}{1-e^{-2m\eta}}\right].
\end{equation*}
Hence, conditioning on $\tilde{\rvw}_{k-1}=\vw_0$, we expect some transition density $p_{\vw_0}(\vw, t|\vw^\prime, t^\prime)$, solving some equivalent It\^o SDE, to satisfy $p_{\vw_0}(\vw, \eta|\vw_0, 0)=p(\vw, \eta | \vw_0, 0)$ for any $\vw\in\R^d$. 
Hence, we consider the following transition density
\begin{equation}
    \small
    \label{def:impsde_kernel}
    \begin{aligned}
        &p_{\vw_0}(\vw, t | \vw^\prime, t^\prime)\\
        &=  \left(\frac{2\pi (1-e^{-2m(t-t^\prime)})}{m}\right)^{-d/2}\cdot \exp \left[-\frac{m}{2}\cdot \frac{\left\|\vw-\left(\vw^\prime - \left(e^{m(t-t^\prime)}-1\right)\cdot \frac{\tilde{\eta}}{e^{m\eta}-1}\grad f(\vw_0)\right) e^{-m(t-t^\prime)}\right\|^2}{1-e^{-2m(t-t^\prime)}}\right]\\
        &= \left(\frac{2\pi (1-e^{-2m(t-t^\prime)})}{m}\right)^{-d/2}\cdot \exp \left[-\frac{m}{2}\cdot \frac{\left\|\vw-\vw^\prime e^{-m(t-t^\prime)}+ \left(1-e^{-m(t-t^\prime)}\right)\cdot \left(\frac{\tilde{\eta}}{e^{m\eta} -1}\right)\grad f(\vw_0)\right\|^2}{1-e^{-2m(t-t^\prime)}}\right],
    \end{aligned}
\end{equation}
and try to obtain the equivalent SDE.
For clarity, we leave the calculation of the partial derivatives in Appendix~\ref{sec:par_cal}.
Conditioning on $\tilde{\rvw}_{k-1}=\vw_0$, we have the following equation
\begin{equation}
    \label{def:kbe_equ}
    -\frac{\partial p_{\vw_0}(\vw, t| \vw^\prime, t^\prime)}{\partial t^\prime} = \left(-m\vw^\prime - \frac{m\tilde{\eta}}{e^{m\eta}-1} \grad f(\vw_0)\right)\cdot \frac{\partial p_{\vw_0}(\vw, t|\vw^\prime, t^\prime)}{\partial \vw^\prime} + 1\cdot \sum_{i=1}^d \frac{\partial^2 p(\vw,t | \vw^\prime, t^\prime ,\vw_0)}{\left(\partial \vw_i\right)^2}.
\end{equation}
Suppose there is an It\^o SDE 
\begin{equation*}
    \der\hat{\rvw}_t = -\vmu(\hat{\rvw}_t)\der t + \vsigma(\hat{\rvw}_t)\der \mB_t.
\end{equation*}
Its the infinitesimal operator is given as that in Eq.~\ref{def:ifosde}, i.e.,
\begin{equation}
    \label{def:ifo_impsde}
    \gL \phi(\vw) =\sum_{i} \frac{\partial \phi(\vw)}{\partial \vw_i} \vmu_i(\vw) + \frac{1}{2}\sum_{i,j}\left[\frac{\partial^2 \phi(\vw)}{\partial \vw_i \partial \vw_j}\right]\cdot \left[\vsigma(\vw)\vsigma(\vw)^\top\right]_{ij}.
\end{equation}
If we choose
\begin{equation*}
    \vmu(\vw^\prime)=-m\vw^\prime - \frac{m\tilde{\eta}}{e^{m\eta}-1} \grad f(\vw_0)\quad \mathrm{and}\quad \vsigma(\vw^\prime)=\sqrt{2}\mI,
\end{equation*}
Eq.~\ref{def:kbe_equ} denotes that then $p_{\vw_0}$ solves the following backward Kolmogorov equation
\begin{equation*}
        -\frac{\partial p_{\vw_0}(\vw_t, t|\vw_s, s)}{\partial s}=\gL p_{\vw_0}(\vw_t, t|\vw_s, s),\quad p_{\vw_0}(\vw^\prime,s|\vw,s)=\delta(\vw^\prime - \vw).
\end{equation*}
According to Lemma~\ref{lem:kbe}, we can consider $p_{\vw_0}$ as the transition density of $\left\{\hat{\rvw}_t\right\}_{t\ge 0}$ obtained by the following SDE
\begin{equation*}
    \der \hat{\rvw}_t = -\left(m\hat{\rvw}_t+\frac{m\tilde{\eta}_{k-1}}{e^{m\eta_k}-1}\grad f(\vw_0)\right)\der t + \sqrt{2}\der \mB_t.
\end{equation*}
It means, conditioning on $\tilde{\rvw}_{t-1}=\vw_0$, $\hat{\rvw}_{\eta_k}$ shares the same distribution as $\tilde{\rvw}_k$ because 
\begin{equation*}
    p_{\vw_0}(\vw, \eta|\vw_0, 0)=p(\vw, \eta | \vw_0, 0).
\end{equation*}
Hence, the proof is completed.

\section{Convergence of KL divergence under LSI}
\label{appsec:main_results}
With the same setting as Appendix~\ref{sec:implictsde}, we assume $U(\vw)$ in Eq.~\ref{def:tar_dis} satisfies the following assumptions. 
\begin{enumerate}[label=\textbf{[{A}{\arabic*}]}]
    \item \label{ass:app_pri_g} The function $g$ is $m$-strongly convex and can be explicitly integrated. For simplicity, we choose $g(\vw)=(m/2)\left\|\vw\right\|^2$ in following sections.
    \item \label{ass:post_f} $L_U$-smoothness of negative log-likelihood, i.e., for any $\vw\in\R^d$, we suppose
    \begin{equation*}
        \left\|\nabla^2 f(\vw)\right\|\preceq L\mI.
    \end{equation*}
    \item \label{ass:post_f2} Similar to the assumption in~\cite{freund2022convergence}, for any $\vw\in\R^d$, we suppose
    \begin{equation*}
        \vzero\preceq \nabla^2 f(\vw) (\grad^2 f(\vw))^\top\preceq \mH^{\frac{1}{2}}\left(\mH^{\frac{1}{2}}\right)^\top = \mH \preceq L^2\mI,\quad \mathrm{where}\quad \vzero \preceq \mH^{\frac{1}{2}}.
    \end{equation*}
    \item \label{ass:post_lsi} The posterior distribution, $\mu_* \propto \exp(-U)$, satisfies log-Sobolev inequality (LSI). It means
    \begin{equation*}
        \mathbb{E}_{\mu_*}[q^2\log q^2]-\mathbb{E}_{\mu_*}[q^2]\log \mathbb{E}_{\mu_*}[q^2]\le \frac{2}{\alpha_*}\mathbb{E}_{\nu_f}\left[\left\|\grad q\right\|^2\right]
    \end{equation*}
    for any smooth function $q\colon \R^d\rightarrow\R$ where 
    \begin{equation*}
        \der \mu_*(\vw) = \exp \left(-\left(f(\vw) + g(\vw)\right)\right) \der \vw = \exp (-U(\vw))\der\vw.
    \end{equation*}
    
\end{enumerate}
For any suitable probability density function $\tilde{p}_{k-1}$ of $\tilde{\rvw}_{k-1}$, we set
\begin{equation*}
    \hat{p}_t(\vw)\coloneqq \int_{\vw_0} p_{\vw_0}(\vw, t|\vw_0, 0)\cdot \tilde{p}_{k-1}(\vw_0)\der \vw_0,
\end{equation*}
where $p_{\vw_0}$ is defined in Eq.~\ref{def:impsde_kernel}, and
\begin{equation*}
    \begin{aligned}
        \hat{p}_{t|0}(\vw|\vw_0) \coloneqq & p_{\vw_0}(\vw, t|\vw_0, 0)\\
        \hat{p}_{0t}(\vw_0, \vw) \coloneqq & \tilde{p}_{k-1}(\vw_0)\cdot p_{\vw_0}(\vw, t|\vw_0, 0)
    \end{aligned}
\end{equation*}
for abbreviation.
According to Appendix~\ref{sec:implictsde}, we have $\hat{p}_{\eta_k}(\vw)=\tilde{p}_k(\vw)$ for any $\vw\in\R^d$ where $\tilde{p}_k$ denotes the density function of $\tilde{\rvw}_k$.
It also implies
\begin{equation*}
    \mathrm{KL}\left(\hat{p}_{\eta_k} \| p^*\right) = \mathrm{KL}\left(\tilde{p}_{k}\| p^*\right)\quad \mathrm{and}\quad \mathrm{KL}\left(\hat{p}_0 \| p^*\right) = \mathrm{KL}\left(\tilde{p}_{k-1}\| p^*\right).
\end{equation*}

\subsection{Proof of Lemma~\ref{lem:con_kl_contraction}}
Conditioning on $\vw_0$, the Kolmogorov forward equation (Fokker-Planck equation), i.e., Lemma~\ref{lem:fpe},  shows 
\begin{equation*}
    \frac{\partial \hat{p}_{t|0}(\vw|\vw_0)}{\partial t} = \gL^*\hat{p}_{t|0}(\vw|\vw_0) = \grad \cdot \left(\hat{p}_{t|0}(\vw|\vw_0)\left(m\vw + \frac{m\tilde{\eta}_{k-1}}{e^{m\eta_k}-1}\grad f(\vw_0)\right)\right)+\Delta \hat{p}_{t|0}\left(\vw|\vw_0\right),
\end{equation*}
where $\gL^*$ is adjoint operator of $\gL$ given in Eq.~\ref{def:ifo_impsde}.
In this condition, we set
\begin{equation}
    \label{eq:eta_connection}
    \left(m\tilde{\eta}_{k-1}\right)/\left(e^{m\eta_k}-1\right)=1
\end{equation}
in the following.
It means the evolution of $p_t$ can be derived as follows.
\begin{equation}
    \begin{aligned}
        \frac{\partial \hat{p}_t(\vw)}{\partial t}=& \int \frac{\partial \hat{p}_{t|0}(\vw|\vw_0)}{\partial t} p_0(\vw_0)\der\vw_0\\
        = & \int \left[\grad \cdot \left(\hat{p}_{t|0}(\vw|\vw_0)\left(m\vw + \grad f(\vw_0)\right)\right)+\Delta \hat{p}_{t|0}\left(\vw|\vw_0\right)\right]p_0(\vw_0)\der \vw_0\\
        = & \int \grad \cdot \left(\hat{p}_{0t}(\vw_0, \vw)\left(m\vw + \grad f(\vw_0)\right)\right) \der \vw_0 + \int \Delta \hat{p}_{0t}(\vw_0, \vw) \der \vw_0\\
        =& \grad \cdot \left(\hat{p}_t(\vw)\int \hat{p}_{0|t}\left(\vw_0|\vw\right)\left(m\vw + \grad f(\vw_0)\right)\right) + \Delta \hat{p}_t(\vw).
    \end{aligned}
\end{equation}
Hence, the dynamic of KL divergence  can be obtained as
\begin{equation}
    \label{eq:kld_imp_sde}
    \begin{aligned}
        &\frac{\der \mathrm{KL}(\hat{p}_t\|p_*)}{\der t}\notag\\
        &=  \frac{\der}{\der t}\int \hat{p}_t(\vw) \log \frac{\hat{p}_t(\vw)}{p_*(\vw)}\der \vw = \int \frac{\partial \hat{p}_t(\vw)}{\partial t}\log \frac{\hat{p}_t(\vw)}{p_*(\vw)}\der \vw\\
        &=  \int \left[\grad \cdot \left(\hat{p}_t(\vw)\int \hat{p}_{0|t}\left(\vw_0|\vw\right)\left(m\vw + \grad f(\vw_0)\right)\right) + \Delta \hat{p}_t(\vw)\right] \log \frac{\hat{p}_t(\vw)}{p_*(\vw)}\der \vw\\
        &=  \int \grad \cdot \left(\hat{p}_t(\vw)\cdot \left(\int \hat{p}_{0|t}(\vw_0|\vw)\left(m\vw+\grad f(\vw_0)\right)\der \vw_0+ \grad \log \hat{p}_t(\vw)\right)\right) \log \frac{\hat{p}_t(\vw)}{p_*(\vw)}\der \vw\\
        &=  -\int \hat{p}_t(\vw)\cdot \left[\grad \log \frac{\hat{p}_t(\vw)}{p_*(\vw)}+ \int \hat{p}_{0|t}(\vw_0|\vw)\left(m\vw+\grad f(\vw_0)\right)\der \vw_0 - \left(m\vw + \grad f(\vw)\right)\right]\\
        &\qquad \grad \log \frac{\hat{p}_t(\vw)}{p_*(\vw)}\der\vw\\
        &=  -\int \hat{p}_t(\vw) \left\|\grad \log \frac{\hat{p}_t(\vw)}{p_*(\vw)}\right\|^2 \der \vw - \int \hat{p}_{0t}(\vw_0, \vw) \left(\grad f(\vw_0)-\grad f(\vw)\right)^\top \grad \log \frac{\hat{p}_t(\vw)}{p_*(\vw)}\der (\vw_0,\vw)\\
        &\le  -\frac{3}{4}\int \hat{p}_t(\vw) \left\|\grad \log \frac{\hat{p}_t(\vw)}{p_*(\vw)}\right\|^2\der \vw + \underbrace{\int \hat{p}_{0t}\left(\vw_0, \vw\right)\left\|\grad f(\vw_0)-\grad f(\vw)\right\|^2 \der (\vw_0, \vw)}_{\mathrm{term\ 1}}.
    \end{aligned}
\end{equation}
In the following, we start to upper bound $\mathrm{term\ 1}$ of Eq.~\ref{eq:kld_imp_sde}.
For abbreviation, we set
\begin{equation*}
    \overline{\vw}(t, \vw_0)=e^{-mt}\vw_0 - \frac{1-e^{-mt}}{m} \grad f(\vw_0),
\end{equation*}
and have
\begin{equation}
    \label{ineq:upb_term1}
    \begin{aligned}
        \mathrm{term\ 1} = &\int \hat{p}_0(\vw_0) \int \hat{p}_{t|0}(\vw|\vw_0)\left\|\grad f(\vw)-\grad f(\vw_0)\right\|^2\der \vw \der\vw_0\\
        = & \int \hat{p}_0(\vw_0) \int \hat{p}_{t|0}(\vw|\vw_0)\left\|\grad f(\vw)-\grad f(\overline{\vw}(t,\vw_0))+ \grad f(\overline{\vw}(t,\vw_0)) -\grad f(\vw_0)\right\|^2\der \vw \der\vw_0\\
        \le & 2\underbrace{\int \hat{p}_0(\vw_0) \int \hat{p}_{t|0}(\vw|\vw_0)\left\|\grad f(\vw)-\grad f(\overline{\vw}(t,\vw_0))\right\|^2\der \vw \der\vw_0}_{\mathrm{term\ 1.1}}\\
        & + 2\underbrace{\int \hat{p}_0(\vw_0) \left\|\grad f(\overline{\vw}(t,\vw_0)) -\grad f(\vw_0)\right\|^2 \der\vw_0}_{\mathrm{term\ 1.2}}.
    \end{aligned}
\end{equation}
We first consider $\mathrm{term\ 1.1}$.
Following from Lemma~\ref{lem:granorm_bound}, for any $\vw, \overline{\vw}(t,\vw_0)$ we have
\begin{equation*}
    \left\|\grad f(\vw) - \grad f(\overline{\vw}(t,\vw_0))\right\|^2 \le \left(\vw-\overline{\vw}(t,\vw_0)\right)^\top\mH\left(\vw - \overline{\vw}(t,\vw_0)\right) = \left\|\mH^{\frac{1}{2}}\left(\vw - \overline{\vw}(t,\vw_0)\right)\right\|^2
\end{equation*}

According to the transition kernel conditioning on $\vw_0$ shown in Eq.~\ref{def:impsde_kernel}, we have
\begin{equation}
    \label{ineq:upb_term11}
    \small
    \begin{aligned}
        \mathrm{term\ 1.1} = & \int \hat{p}_{0}(\vw_0)\int \left(\frac{2\pi (1-e^{-2mt})}{m}\right)^{-d/2}\cdot \exp \left(-\frac{m}{2}\cdot \frac{\left\|\vw-\overline{\vw}(t,\vw_0)\right\|^2}{1-e^{-2mt}}\right)\cdot \left\|\grad f(\vw)-\grad f(\overline{\vw}(t,\vw_0))\right\|^2 \der\vw\der\vw_0\\
        \le & \int \hat{p}_{0}(\vw_0)\int \left(\frac{2\pi (1-e^{-2mt})}{m}\right)^{-d/2}\cdot \exp \left(-\frac{m}{2}\cdot \frac{\left\|\vw-\overline{\vw}(t,\vw_0)\right\|^2}{1-e^{-2mt}}\right)\cdot \left\|\mH^{\frac{1}{2}}\left(\vw - \overline{\vw}(t,\vw_0)\right)\right\|^2 \der\vw\der\vw_0\\
        =& \int \hat{p}_{0}(\vw_0) \sum_{i=1}^d\mathrm{var}_{\rvw\sim \hat{p}_{t|0}}\left[[\mH^{\frac{1}{2}}\rvw]_i\right] \der\vw_0 = \int \hat{p}_{0}(\vw_0)\mathrm{Tr}\left(\mathrm{cov}\left[\mH^{\frac{1}{2}}\rvw\right] \right) \der \vw_0\\
        = & \frac{1-e^{-2mt}}{m}\int \hat{p}_{0}(\vw_0)\mathrm{Tr}(\mH) \der\vw_0 = \frac{1-e^{-2mt}}{m}\cdot\mathrm{Tr}(\mH),
    \end{aligned}
\end{equation}
where the fourth equation follows from the definition of $\hat{p}_{t|0}$ and Lemma~\ref{lem:cov_bound}.

Similar to $\mathrm{term\ 1.1}$, following from Lemma~\ref{lem:granorm_bound}, $\mathrm{term\ 1.2}$ satisfies
\begin{equation*}
    \label{ineq:upb_term12_inner_part_rev}
    \begin{aligned}
        \left\|\grad f(\overline{\vw}(t,\vw_0))- \grad f(\vw_0)\right\|^2\le & \left(\overline{\vw}(t,\vw_0) - \vw_0\right)^\top \mH \left(\overline{\vw}(t,\vw_0) - \vw_0\right)\\
        = & \left(\frac{1-e^{-mt}}{m}\right)^2\cdot \left(m\vw_0 + \grad f(\vw_0)\right)^\top\mH\left(m\vw_0 + \grad f(\vw_0)\right)\\
        = & \left(\frac{1-e^{-mt}}{m}\right)^2\cdot \grad U(\vw_0)^\top \mH \grad U(\vw_0)
    \end{aligned}
\end{equation*}
Then, suppose $\rvw_0\sim \hat{p}_0$ and $\rvw^* \sim p_*$ with an optimal coupling $(\rvw^*, \rvw_0)\sim \gamma$, and we obtain
\begin{equation}
    \label{ineq:upb_term12_rev}
    \begin{aligned}
        \mathrm{term\ 1.2} = & \int \hat{p}_0(\vw_0) \left\|\grad f(\overline{\vw}(t,\vw_0)) -\grad f(\vw_0)\right\|^2 \der\vw_0\\
        \le &\left(\frac{1-e^{-mt}}{m}\right)^2\cdot \int \gamma(\vw^*, \vw_0)\left\|\mH^{\frac{1}{2}}\left(\grad U(\vw_0)-\grad U(\vw^*)\right)+\mH^{\frac{1}{2}}\grad U(\vw^*)\right\|^2\der(\vw^*, \vw_0)\\
        \le & \left(\frac{1-e^{-mt}}{m}\right)^2\cdot \int 2L^2(L+m)^2\cdot \gamma(\vw^*, \vw_0)\left\|\vw^* -\vw_0\right\|^2 \der (\vw^*, \vw_0)\\
        &+\left(\frac{1-e^{-mt}}{m}\right)^2\cdot \int 2 p_*(\vw^*)\left\|\mH^{\frac{1}{2}}\grad U(\vw^*)\right\|^2 \der \vw^*\\
        = & 2L^2(L+m)^2\cdot \left(\frac{1-e^{-mt}}{m}\right)^2 \cdot \frac{2}{\alpha_*}\mathrm{KL}\left(\hat{p}_0\|p_*\right) + 2\left(\frac{1-e^{-mt}}{m}\right)^2 \cdot \int p_*(\vw^*)\left\|\mH^{\frac{1}{2}}\grad U(\vw^*)\right\|^2 \der \vw^*
    \end{aligned}
\end{equation}
For the last term, it satisfies
\begin{equation*}
    \begin{aligned}
        &\int p_*(\vw^*)\left\|\mH^{\frac{1}{2}}\grad U(\vw^*)\right\|^2 \der \vw^*= \int \grad p_*^\top(\vw^*)\mH \grad\ln p_*(\vw^*)\der\vw^*\\
        = & \int p_*(\vw^*) \Tr(\mH \left(-\grad^2\ln p_*(\vw^*)\right))\der \vw^*\\
        = & \int p_*(\vw^*) \Tr(\mH \left(\grad^2 f(\vw^*)+m\mI\right))\der \vw^* \le \Tr(\mH^{3/2})+m\Tr(\mH),
    \end{aligned}
\end{equation*}
where the last inequality follows from Lemma~\ref{lem:msq_bound}.
Therefore, we have
\begin{equation}
    \label{ineq:upb_term12_rev}
    \begin{aligned}
        \mathrm{term\ 1.2} \le \left(\frac{L}{m}\right)^2 \cdot \left(1-e^{-mt}\right)^2 \cdot \left(\frac{4(L+m)^2}{\alpha_*}\mathrm{KL}\left(p_0\|p_*\right)+ \frac{2}{L^2}\left(\Tr(\mH^{3/2})+m\Tr(\mH)\right)\right) 
    \end{aligned}
\end{equation}
Combining Eq.~\ref{eq:kld_imp_sde}, Eq.~\ref{ineq:upb_term1}, Eq.~\ref{ineq:upb_term11} and Eq.~\ref{ineq:upb_term12_rev}, we have
\begin{equation*}
    \begin{aligned}
         \frac{\der \mathrm{KL}(\hat{p}_t\|p_*)}{\der t} \le & -\frac{3}{4}\int \hat{p}_t(\vw) \left\|\grad \log \frac{\hat{p}_t(\vw)}{p_*(\vw)}\right\|^2\der \vw + \frac{2(1-e^{-2mt})}{m}\cdot \mathrm{Tr}(\mH)\\
         & + 2\left(\frac{L}{m}\right)^2 \cdot \left(1-e^{-mt}\right)^2 \cdot \left(\frac{4(L+m)^2}{\alpha_*}\mathrm{KL}\left(p_0\|p_*\right)+\frac{2}{L^2}\left(\Tr(\mH^{3/2})+m\Tr(\mH)\right)\right).
    \end{aligned}
\end{equation*}
Then, by LSI of $p_*$, i.e.,~\ref{ass:post_lsi}, it has
\begin{equation}
    \label{ineq:kl_contraction_pre}
    \begin{aligned}
         \frac{\der \mathrm{KL}(\hat{p}_t\|p_*)}{\der t} \le & -\frac{3\alpha_*}{2}\cdot\mathrm{KL}(\hat{p}_t\|p_*) + \frac{2(1-e^{-2mt})}{m}\cdot \mathrm{Tr}(\mH)\\
         & + 2\left(\frac{L}{m}\right)^2 \cdot \left(1-e^{-mt}\right)^2 \cdot \left(\frac{4(L+m)^2}{\alpha_*}\mathrm{KL}\left(p_0\|p_*\right)+\frac{2}{L^2}\left(\Tr(\mH^{3/2})+m\Tr(\mH)\right)\right).
    \end{aligned}
\end{equation}
We wish to integrate the inequality above for $0\le t\le \eta$. 
If we require 
\begin{equation}
    \label{ineq:eta_choice_1}
    \eta \le \min\left\{ (8m)^{-1}, (8\Tr(\mH^{\frac{1}{2}}))^{-1}, (1.5\alpha_*)^{-1}, \frac{\alpha_*}{8\sqrt{2}L^2}\right\},
\end{equation}
then it has
\begin{equation}
    \label{ineq:etabound1}
    \begin{aligned}
        &  \frac{\left(1-e^{-mt}\right)^2}{m^2}  \cdot 4\left(\Tr(\mH^{3/2})+m\Tr(\mH)\right)\\
        \le & \frac{2(1-e^{-2mt})}{m}\cdot \frac{1-e^{-2mt}}{2m} \cdot 4\left(\Tr(\mH^{3/2})+m\Tr(\mH)\right)\\
        \le & \frac{2(1-e^{-2mt})}{m} \cdot 4t \left(\Tr(\mH^{3/2})+m\Tr(\mH)\right)\le \frac{2(1-e^{-2mt})}{m}\cdot \mathrm{Tr}(\mH),
    \end{aligned}
\end{equation}
where the first inequality follows from the monotonicity of $e^{-x}$, the second inequality follows from the fact $(1-e^{-mt})/m \le t$ for any $m\ge 0$, and the last inequality establishes by choice of $\eta$, i.e., Eq.~\ref{ineq:eta_choice_1} and $t\le \eta$.
Plugging Eq.~\ref{ineq:etabound1} into Eq.~\ref{ineq:kl_contraction_pre}, we have
\begin{equation}
    \label{ineq:kl_contraction_mid}
    \begin{aligned}
         \frac{\der \mathrm{KL}(\hat{p}_t\|p_*)}{\der t} \le & -\frac{3\alpha_*}{2}\cdot\mathrm{KL}(\hat{p}_t\|p_*) + \frac{8(L+m)^2L^2}{\alpha_*}\cdot \left(\frac{1-e^{-mt}}{m}\right)^2\cdot \mathrm{KL}\left(p_0\|p_*\right) +\frac{4(1-e^{-2mt})}{m}\cdot \mathrm{Tr}(\mH)\\
         \le & -\frac{3\alpha_*}{2}\cdot\mathrm{KL}(\hat{p}_t\|p_*) + \frac{8(L+m)^2L^2}{\alpha_*}\eta^2\cdot \mathrm{KL}\left(p_0\|p_*\right) + 8\eta \cdot \mathrm{Tr}(\mH).
    \end{aligned}
\end{equation}
Multiplying both sides by $\exp((3\alpha_* t)/2)$, then we have
\begin{equation*}
    \frac{\der}{\der t}\left(e^{\frac{3\alpha_*t}{2}}\mathrm{KL}(\hat{p}_t\|p_*)\right)\le e^{\frac{3\alpha_* t}{2}}\left(\frac{8(L+m)^2L^2}{\alpha_*}\eta^2\cdot \mathrm{KL}\left(p_0\|p_*\right) + 8\eta \cdot \mathrm{Tr}(\mH)\right).
\end{equation*}
By integrating from $t=0$ to $t=\eta$, we have 
\begin{equation*}
    \begin{aligned}
        e^{\frac{3\alpha_*\eta}{2}}\mathrm{KL}(\hat{p}_\eta\|p_*) - \mathrm{KL}(\hat{p}_0\|p_*)\le & \frac{2}{3\alpha_*}\cdot \left(e^{\frac{3\alpha_* \eta}{2}}-1\right)\cdot \left(\frac{8(L+m)^2L^2}{\alpha_*}\eta^2\cdot \mathrm{KL}\left(p_0\|p_*\right) + 8\eta \cdot \mathrm{Tr}(\mH)\right)\\
        \le & 2\eta\cdot \left(\frac{8(L+m)^2L^2}{\alpha_*}\eta^2\cdot \mathrm{KL}\left(p_0\|p_*\right) + 8\eta \cdot \mathrm{Tr}(\mH)\right),
    \end{aligned}
\end{equation*}
where the second inequality follows from 
\begin{equation*}
    e^c\le 1+2c\quad \mathrm{when}\quad c\in[0,1]
\end{equation*}
by choosing $c=(3\alpha_*\eta)/(2)$.
The range of $(3\alpha_*\eta)/(2)$ can be obtained by Eq.~\ref{ineq:eta_choice_1}.
Hence, we have
\begin{equation*}
    \begin{aligned}
        \mathrm{KL}(\hat{p}_{\eta}\|p_*)\le e^{-\frac{3\alpha_* \eta}{2}}\left(1+\frac{16(L+m)L^2}{\alpha_*}\eta^3\right)\mathrm{KL}(\hat{p}_0\|p_*) + e^{-\frac{3\alpha_* \eta}{2}} \cdot 16\eta^2\mathrm{Tr}(\mH).
    \end{aligned}
\end{equation*}
Without loss of generality, we suppose $m\le L$, then because of Eq.~\ref{ineq:eta_choice_1}, we have
\begin{equation*}
    1+\frac{16(L+m)L^2}{\alpha_*}\eta^3 \le 1+ \frac{64L^4}{\alpha_*}\eta^3 \le 1+\frac{\alpha_* \eta}{2}\le e^{\frac{1}{2}\alpha_*\eta}.
\end{equation*}
Combining the previous inequality with the fact $\exp(-(3\alpha_*)/2)\le 1$, it has
\begin{equation}
    \label{ineq:kl_contraction_fin}
    \begin{aligned}
        \mathrm{KL}(\hat{p}_{\eta}\|p_*)\le e^{-\alpha_*\eta}\mathrm{KL}(\hat{p}_0\|p_*) + 16\eta^2\mathrm{Tr}(\mH).
    \end{aligned}
\end{equation}
Hence, the proof is completed.

\subsection{Proof of Theorem~\ref{thm:kl_con_fx}}
Suppose $\eta_k$ is fixed for all iterations, which implies $\tilde{\eta}_{k-1}$ can also be obtained by Eq.~\ref{eq:eta_connection}.
Then, applying the recursion of Eq.~\ref{ineq:kl_contraction_fin}, we have
\begin{equation*}
    \mathrm{KL}(\tilde{p}_k\|p_*)\le e^{-\alpha_*\eta k}\mathrm{KL}(\tilde{p}_0\|p_*) + \frac{16\eta^2\mathrm{Tr}(\mH)}{1-e^{-\alpha_* \eta}}\le e^{-\alpha_*\eta k}\mathrm{KL}(\tilde{p}_0\|p_*) + \frac{32\eta}{\alpha_*}\cdot \mathrm{Tr}(\mH),
\end{equation*}
where the last inequality follows from 
\begin{equation*}
    1-e^{-c}\ge \frac{3}{4}c\quad\mathrm{when}\quad c\in\left[0,\frac{1}{4}\right].
\end{equation*}
By supposing $c=\alpha_*\eta$, we have 
\begin{equation*}
    \eta\le \frac{\alpha_*}{8\sqrt{2}L^2}\le \frac{1}{4\alpha_*} \Rightarrow c\coloneqq \alpha_*\eta \quad \mathrm{and}\quad c\in \left[0,\frac{1}{4}\right].
\end{equation*}
Hence, for any $\epsilon>0$, we set
\begin{equation}
    \label{ineq:eta_choice_fin}
    \eta \le \min\left\{(8m)^{-1}, (8\Tr(\mH^{\frac{1}{2}}))^{-1}, (1.5\alpha_*)^{-1}, \frac{\alpha_*}{8\sqrt{2}L^2}, \frac{\alpha_*\epsilon}{64\mathrm{Tr}(\mH)}\right\},
\end{equation}
then it has
\begin{equation*}
    \mathrm{KL}(\tilde{p}_k\|p_*)\le e^{-\alpha_*\eta k}\mathrm{KL}(\tilde{p}_0\|p_*) + \frac{\epsilon}{2}.
\end{equation*}
In this condition, requiring
\begin{equation*}
    k\ge \frac{1}{\alpha_* \eta}\log \frac{2\mathrm{KL}(\tilde{p}_0\|p_*)}{\epsilon},
\end{equation*}
we have $\mathrm{KL}(\tilde{p}_k\|p_*)\le \epsilon$, and the proof is completed.

\subsection{Proof of Theorem~\ref{thm:kl_con_vary}}
Using the step size $\eta_k = (8\hat{\eta})/9$ where
\begin{equation*}
    \hat{\eta} \coloneqq \min\left\{ (8m)^{-1}, (8\Tr(\mH^{\frac{1}{2}}))^{-1}, (1.5\alpha_*)^{-1}, \frac{\alpha_*}{8\sqrt{2}L^2}\right\},
\end{equation*}
for $k=1, 2,\ldots ,K_0$, we get
\begin{equation}
    \label{ineq:induc_init}
    \begin{aligned}
        \mathrm{KL}(\tilde{p}_k\|p_*)\le &e^{-\alpha_* k \cdot 8\hat{\eta}/9}\mathrm{KL}(\tilde{p}_0\|p_*) + \frac{24 \cdot \left(\frac{8\hat{\eta}}{9}\right)^2\mathrm{Tr}(\mH)}{1-e^{-\alpha_* \cdot 8\hat{\eta}/9}}\\
        \le & e^{-\alpha_* k \cdot 8\hat{\eta}/9}\mathrm{KL}(\tilde{p}_0\|p_*) + \frac{32}{\alpha_*}\cdot \frac{8\hat{\eta}}{9} \cdot \mathrm{Tr}(\mH)\\
        \le & \frac{123\hat{\eta}}{\alpha_*}\cdot \mathrm{Tr}(\mH) + \frac{32}{\alpha_*}\cdot \frac{8\hat{\eta}}{9} \cdot \mathrm{Tr}(\mH) \le \frac{2^9 \cdot \mathrm{Tr}(\mH)}{\alpha_*}\cdot \frac{8\hat{\eta}}{9} = \frac{64\mathrm{Tr}(\mH)}{\alpha_*}\eta_{k+1},
    \end{aligned}
\end{equation}
where the first inequality follows from Theorem~\ref{thm:kl_con_fx}.
Starting from iteration $K_0$,  we use the decreasing step size
\begin{equation*}
    \eta_{k+1} = \frac{8\hat{\eta}}{9+3(k-K_0)c\alpha_*}.
\end{equation*}
Let us show by induction over $k$ that
\begin{equation}
    \label{ineq:cr_varying_ss}
    \mathrm{KL}\left(\tilde{p}_k\|p_*\right)\le \frac{2^9 \cdot \hat{\eta}\cdot \mathrm{Tr}(\mH)}{3\alpha_*\left(3+(k-K_0)c\alpha_*\right)} = \frac{64\mathrm{Tr}(\mH)}{\alpha_*}\eta_{k+1},\quad k\ge K_0.
\end{equation}
For $k = K_0$, this inequality is true in view of Eq.~\ref{ineq:induc_init}.
Assume Eq.~\ref{ineq:cr_varying_ss} establishes at iteration $k$.
For $k+1$, we have
\begin{equation}
    \label{ineq:indunction_varying}
    \begin{aligned}
        \mathrm{KL}\left(\tilde{p}_{k+1}\|p_*\right)\le &\left(1-\frac{3}{4}\alpha_* \eta_{k+1}\right)\mathrm{KL}(\tilde{p}_{k}\|p_*) + 24\eta_{k+1}^2\mathrm{Tr}(\mH)\\
        \le &\left(1-\frac{3}{4}\alpha_* \eta_{k+1}\right)\cdot \frac{64\mathrm{Tr}(\mH)}{\alpha_*}\cdot \eta_{k+1} + 24\eta^2_{k+1}\mathrm{Tr}(\mH)\\
        = & \frac{64\mathrm{Tr}(\mH)}{\alpha_*}\cdot \eta_{k+1} - 24\eta^2_{k+1}\mathrm{Tr}(\mH)\\
        = &\frac{64\mathrm{Tr}(\mH)}{\alpha_*}\eta_{k+1}\cdot\left(1-\frac{3}{8}\alpha_*\eta_{k+1} \right).
    \end{aligned}
\end{equation}
Then, we have
\begin{equation*}
    \begin{aligned}
        & \frac{c\alpha_*}{1+(k-K_0+1)c\alpha_*} \le \frac{c\alpha_*}{1+(k-K_0)c\alpha_*}\\
        \Leftrightarrow\quad & 1-\frac{c\alpha_*}{1+(k-K_0)c\alpha_*}\le \frac{1+(k-K_0)c\alpha_*}{1+(k-K_0+1)c\alpha_*}\\
        \Leftrightarrow\quad & 1-\frac{c\alpha_*}{1+(k-K_0)c\alpha_*}\le \frac{8c}{3+3(k-K_0+1)c\alpha_*}\cdot \frac{3+3(k-K_0)c\alpha_*}{8c}\\
        \Leftrightarrow\quad & \left(1-\frac{3}{8}\alpha_*\eta_{k+1} \right) \eta_{k+1}\le \eta_{k+2}.
    \end{aligned}
\end{equation*}
Plugging this inequality into Eq.~\ref{ineq:indunction_varying}, we have
\begin{equation*}
    \mathrm{KL}\left(\tilde{p}_{k+1}\|p_*\right)\le \frac{64\mathrm{Tr}(\mH)}{\alpha_*}\eta_{k+2}.
\end{equation*}
This completes the proof of the theorem.

\section{Auxiliary lemmas}
\begin{lemma}(Theorem 1.a in~\cite{bellman1968some})
    \label{lem:msq_bound}
    If $\mA \succeq \mB \succeq \vzero$, then $\mA^{\frac{1}{2}}\succeq \mB^{\frac{1}{2}}$, where $\mA^{\frac{1}{2}}$ is semi-positive definite square root of $\mA$.
\end{lemma}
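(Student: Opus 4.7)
The plan is to exploit the classical scalar integral representation
\begin{equation*}
\sqrt{x} \;=\; \frac{1}{\pi}\int_{0}^{\infty} \frac{x}{\sqrt{t}\,(x+t)}\, dt, \qquad x \ge 0,
\end{equation*}
which one can verify by the substitution $t = x u^{2}$, and to lift it to matrices via the spectral/functional calculus. First I would rewrite $x/(x+t) = 1 - t/(x+t)$ so that, after subtracting the analogous expression for $B$, the pieces that would diverge on their own cancel and one is left with the compact identity
\begin{equation*}
\mA^{1/2} - \mB^{1/2} \;=\; \frac{1}{\pi}\int_{0}^{\infty} \sqrt{t}\,\bigl[(\mB+t\mI)^{-1} - (\mA+t\mI)^{-1}\bigr]\, dt.
\end{equation*}
This representation reduces the lemma to checking a pointwise (in $t$) operator inequality, which is much easier than reasoning about $\mA^{1/2}$ and $\mB^{1/2}$ directly.

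The core step is then to show that for every fixed $t>0$ the integrand is positive semidefinite. Since $\mA \succeq \mB \succeq \vzero$ implies $\mA + t\mI \succeq \mB + t\mI \succ \vzero$, and since matrix inversion is order-reversing on the cone of positive definite matrices (which itself follows from the chain $P\succeq Q \iff Q^{-1/2} P Q^{-1/2} \succeq \mI \iff \mI \succeq P^{-1/2} Q P^{-1/2} \iff P^{-1}\preceq Q^{-1}$), we obtain $(\mB+t\mI)^{-1} \succeq (\mA+t\mI)^{-1}$ for each $t>0$. Integrating a PSD-valued function against the positive scalar measure $\sqrt{t}\, dt/\pi$ preserves positive semidefiniteness, and the desired conclusion $\mA^{1/2}\succeq \mB^{1/2}$ falls out.

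The main obstacle I anticipate is justifying the functional-calculus manipulation and integrability when $\mA$ or $\mB$ is merely positive semidefinite rather than positive definite: the resolvent $(\mB+t\mI)^{-1}$ blows up like $1/t$ on any kernel direction as $t\to 0^{+}$, so the individual terms in the integrand are only borderline integrable near $0$. The standard remedy is to prove the strict version first, applying the argument above to $\mA + \epsilon \mI \succeq \mB + \epsilon \mI \succ \vzero$ (where the spectral decomposition and integrability are routine), and then to pass to $\epsilon \downarrow 0$ using continuity of $X\mapsto X^{1/2}$ on the PSD cone. As a backup I would keep in reserve a differential argument: set $C(s) = (\mB + s(\mA-\mB))^{1/2}$ for $s\in[0,1]$ and solve the Lyapunov equation $C'(s)C(s) + C(s)C'(s) = \mA - \mB$, giving $C'(s) = \int_{0}^{\infty} e^{-u C(s)}(\mA-\mB) e^{-u C(s)}\, du \succeq \vzero$ and hence $C(1)\succeq C(0)$; this avoids the resolvent integrals entirely at the cost of handling a matrix ODE.
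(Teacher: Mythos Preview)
Your proposal is correct. The paper itself does not supply a proof of this lemma at all: it is stated as a citation (Theorem~1.a in Bellman~1968) and used as a black box, so there is nothing in the paper to compare your argument against line by line.

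That said, your resolvent-integral argument is a standard and clean way to establish operator monotonicity of $x\mapsto x^{1/2}$; the scalar identity, the subtraction trick $x/(x+t)=1-t/(x+t)$, the order-reversal of inversion on the positive-definite cone, and the $\epsilon$-regularization to pass from positive definite to positive semidefinite are all sound. The backup Lyapunov/heat-semigroup argument you sketch is also valid and is essentially the differential version of the same idea. Either route produces a self-contained proof that the paper simply outsources to the literature.
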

\begin{lemma}
    \label{lem:changeofvar}
    Suppose the joint density function of random variables $\rvx$, $\rvy$ is denoted as $p_{\rvx, \rvy}(\vx,\vy)$, the marginal density function of $\rvx$ and $\rvy$ are denotes as $p_{\rvx}(\vx)$ and $p_{\rvy}(\vy)$ respectively, and the conditional probability density function of $\rvy$ given the occurrence of the value $\vx$ of $\rvx$ can be written as
    \begin{equation*}
        p_{\rvx|\rvy}(\vx|\vy) = \frac{p_{\rvx,\rvy}(\vx,\vy)}{p_{\rvy}(\vy)}.
    \end{equation*}
    Consider another random variable $\rvz$ satisfying $\rvy=\mT(\rvz)$ where $\mT\colon \R^d\rightarrow \R^d$ is bijective and differential.
    We have
    \begin{equation*}
        p_{\rvx|\rvz}(\vx|\vz)=p_{\rvx|\rvy}(\vx|\mT(\vz)).
    \end{equation*}
\end{lemma}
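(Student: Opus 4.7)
The plan is to apply the standard change-of-variables formula to both the joint density of $(\rvx,\rvz)$ and the marginal density of $\rvz$, and then observe that the Jacobian determinants appear in both the numerator and the denominator of the ratio defining $p_{\rvx|\rvz}$ and therefore cancel. No probabilistic machinery beyond the definition of conditional density and the classical transformation formula is needed, and I do not expect any genuine obstacle; the only thing to be careful about is that $\mT$ acts solely on the $\rvy$-coordinate, so only that coordinate contributes a Jacobian factor.

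Concretely, I would first use that $\mT\colon\R^d\to\R^d$ is bijective and differentiable to write, for every $(\vx,\vz)$,
\begin{equation*}
p_{\rvx,\rvz}(\vx,\vz) \;=\; p_{\rvx,\rvy}\bigl(\vx,\mT(\vz)\bigr)\,\bigl|\det J_{\mT}(\vz)\bigr|,
\end{equation*}
where $J_{\mT}(\vz)$ denotes the Jacobian of $\mT$ at $\vz$. This follows by integrating the joint density of $(\rvx,\rvy)$ against an arbitrary test function over any measurable set and performing the substitution $\vy=\mT(\vz)$ in the $\vy$-variable only, with $\vx$ kept as a passive parameter. Next, integrating this identity over $\vx\in\R^d$ (or equivalently applying the one-variable change of variables to the marginal of $\rvy$ alone) gives
\begin{equation*}
p_{\rvz}(\vz) \;=\; p_{\rvy}\bigl(\mT(\vz)\bigr)\,\bigl|\det J_{\mT}(\vz)\bigr|.
\end{equation*}

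Finally, I substitute both displayed identities into the definition
\begin{equation*}
p_{\rvx|\rvz}(\vx|\vz) \;=\; \frac{p_{\rvx,\rvz}(\vx,\vz)}{p_{\rvz}(\vz)},
\end{equation*}
so that the $|\det J_{\mT}(\vz)|$ factors cancel and the right-hand side reduces to $p_{\rvx,\rvy}(\vx,\mT(\vz))/p_{\rvy}(\mT(\vz)) = p_{\rvx|\rvy}(\vx|\mT(\vz))$, which is exactly the claim. The calculation goes through on the support of $p_\rvz$, which (by bijectivity of $\mT$) is the $\mT^{-1}$-image of the support of $p_\rvy$, so the quotient is well-defined wherever the left-hand side is. No additional regularity beyond what is assumed is required.
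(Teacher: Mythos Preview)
Your proposal is correct and follows essentially the same approach as the paper: apply the change-of-variables formula to both the joint density $p_{\rvx,\rvz}$ and the marginal $p_{\rvz}$, then cancel the common Jacobian factor $|\det J_{\mT}(\vz)|$ in the ratio defining $p_{\rvx|\rvz}$.
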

\begin{proof}
    According to the change of variable, we have
    \begin{equation*}
        p_{\rvz}(\vz)=p_{\rvy}(\mT(\vz))\left|\det \left(\grad_{\vz}\mT(\vz)\right)\right|,
    \end{equation*}
    and 
    \begin{equation*}
        p_{\rvx,\rvz}(\vx,\vz)= p_{\rvx,\rvy}(\vx,\mT(\vz))\left|\det \left(\grad_{\vz}\mT(\vz)\right)\right|.
    \end{equation*}
    Then, we have
    \begin{equation*}
        p_{\rvx|\rvz}(\vx|\vz) = \frac{p_{\rvx,\rvz}(\vx,\vz)}{p_{\rvz}(\vz)}=\frac{p_{\rvx,\rvy}(\vx,\mT(\vz))\left|\det \left(\grad_{\vz}\mT(\vz)\right)\right|}{p_{\rvy}(\mT(\vz))\left|\det \left(\grad_{\vz}\mT(\vz)\right)\right|}=p_{\rvx|\rvy}(\vx|\mT(\vz)).
    \end{equation*}
    Hence, the proof is completed.
\end{proof}

\begin{lemma}
    \label{lem:granorm_bound}
    Suppose function $f$ satisfies~\ref{ass:post_f2}, for any $\vw, \vw^\prime\in\R^d$, we have
    \begin{equation*}
        \left\|\grad f(\vw)-\grad f(\vw^\prime)\right\|^2 \le (\vw-\vw^\prime)\mH\left(\vw-\vw^\prime\right).
    \end{equation*}
\end{lemma}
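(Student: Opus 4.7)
The plan is to reduce the claim to a pointwise spectral bound on the Hessian by passing through the fundamental theorem of calculus along the straight line segment from $\vw'$ to $\vw$. Concretely, write
\begin{equation*}
\grad f(\vw) - \grad f(\vw') = \int_0^1 \grad^2 f\bigl(\vw' + t(\vw-\vw')\bigr)\,(\vw-\vw')\,\der t,
\end{equation*}
which is valid by assumption~\ref{ass:post_f} (in particular, $f$ is $C^2$ and $\grad^2 f$ is everywhere bounded). Set $\vw_t \coloneqq \vw' + t(\vw-\vw')$ and $\vu \coloneqq \vw - \vw'$ for brevity.

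Next I would apply the vector Jensen / Cauchy--Schwarz inequality
\begin{equation*}
\left\|\int_0^1 \grad^2 f(\vw_t)\,\vu\,\der t\right\|^2 \le \int_0^1 \bigl\|\grad^2 f(\vw_t)\,\vu\bigr\|^2 \der t,
\end{equation*}
and then rewrite the integrand as a quadratic form, using that $\grad^2 f(\vw_t)$ is symmetric:
\begin{equation*}
\bigl\|\grad^2 f(\vw_t)\,\vu\bigr\|^2 = \vu^\top \grad^2 f(\vw_t)^\top \grad^2 f(\vw_t)\,\vu = \vu^\top \bigl[\grad^2 f(\vw_t)\,(\grad^2 f(\vw_t))^\top\bigr]\,\vu.
\end{equation*}
By assumption~\ref{ass:post_f2}, the bracketed matrix is bounded in the semidefinite order by $\mH$ for every $\vw_t$, so the integrand is pointwise dominated by $\vu^\top \mH \vu$. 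Integrating over $t\in[0,1]$ then yields $\|\grad f(\vw)-\grad f(\vw')\|^2 \le \vu^\top \mH \vu = (\vw-\vw')^\top \mH (\vw-\vw')$, which is the desired inequality.

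There is no real obstacle here: the only subtlety is remembering that the Hessian is symmetric so that $\grad^2 f^\top \grad^2 f$ coincides with $\grad^2 f\,(\grad^2 f)^\top$, which is exactly the quantity bounded by $\mH$ in~\ref{ass:post_f2}. One could alternatively bypass Jensen by using the operator-norm bound $\|\grad^2 f(\vw_t)\,\vu\| \le \|\mH^{1/2}\vu\|$ inside the integral and then applying the triangle inequality, but Jensen gives the cleanest route to the quadratic-form bound.
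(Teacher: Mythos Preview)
Your proof is correct. Both you and the paper start from the fundamental theorem of calculus along the segment and finish by invoking the pointwise bound $\nabla^2 f(\cdot)(\nabla^2 f(\cdot))^\top \preceq \mH$ from~\ref{ass:post_f2}; the only difference is how the ``square of an integral'' is handled. You apply Jensen's inequality for the convex map $\|\cdot\|^2$ to get $\bigl\|\int_0^1 \nabla^2 f(\vw_t)\vu\,\der t\bigr\|^2 \le \int_0^1 \|\nabla^2 f(\vw_t)\vu\|^2\,\der t$ in one step. The paper instead expands $\bigl(\int_0^1 \mH(t)\,\der t\bigr)^\top\bigl(\int_0^1 \mH(t)\,\der t\bigr)$ into a double integral over $[0,1]^2$, splits it into the two triangles $t'\le t$, and then uses the matrix AM--GM inequality $\mH(t)^\top\mH(t') + \mH(t')^\top\mH(t) \preceq \mH(t)^\top\mH(t) + \mH(t')^\top\mH(t')$ (from $(\mH(t)-\mH(t'))^\top(\mH(t)-\mH(t'))\succeq 0$) before bounding each diagonal term by $\mH$. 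Your Jensen argument is shorter and avoids the explicit double-integral bookkeeping; the paper's route makes the underlying matrix inequality more visible. Either way the final constant is the same.
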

\begin{proof}
    We first define the following two functions
    \begin{equation*}
        \begin{aligned}
            g(t) \coloneqq & \grad f(\vw+ t(\vw^\prime-\vw))\\
            g^\prime(t)\coloneqq & \mH(t)\left(\vw^\prime - \vw\right) = \grad^2 f(\vw+ t(\vw^\prime-\vw))\cdot \left(\vw^\prime - \vw\right)
        \end{aligned}
    \end{equation*}
    By the fundamental theorem of calculus, we have
    \begin{equation*}
        \begin{aligned}
            \grad f(\vw^\prime)- \grad f(\vw) = g(1)-g(0) = \int_0^1 g^\prime(t) \der t = \int_0^1 \mH(t)\der t \cdot \left(\vw^\prime - \vw\right).
        \end{aligned}
    \end{equation*}
    It implies
    \begin{equation*}
        \begin{aligned}
            \left\|\grad f(\vw)-\grad f(\vw^\prime)\right\|^2 = & \left(\vw^\prime - \vw\right)^\top \left(\int_{0}^1 \mH(t) \der t\right)^\top \left(\int_{0}^1 \mH(t) \der t\right)\left(\vw^\prime - \vw\right)\\
            \le & \left(\vw^\prime - \vw\right)^\top \int_{0}^1 \int_{0}^t \left[\mH^\top(t)\mH(t^\prime)+ \mH^\top(t^\prime)\mH(t)\right]\der t^\prime \der t \left(\vw^\prime - \vw\right)\\
            \le & \left(\vw^\prime - \vw\right)^\top \int_{0}^1 \int_{0}^t \left[\mH^\top(t)\mH(t)+ \mH^\top(t^\prime)\mH(t^\prime)\right]\der t^\prime \der t \left(\vw^\prime - \vw\right)\\
            \le & \left(\vw^\prime - \vw\right)^\top \int_{0}^1 \int_{0}^t 2\mH \der t^\prime \der t \left(\vw^\prime - \vw\right)\\
            = & \left(\vw^\prime - \vw\right)^\top \mH\left(\vw^\prime - \vw\right),
        \end{aligned}
    \end{equation*}
    where the second inequality follows from
    \begin{equation*}
        \left(\mH(t)-\mH(t^\prime)\right)^\top\left(\mH(t)-\mH(t^\prime)\right) \succeq \vzero,
    \end{equation*}
    and the last inequality follows from~\ref{ass:post_f2}.
    Hence, the proof is completed.
\end{proof}
\begin{lemma}
    \label{lem:cov_bound}
    For a random (column) vector $\rvw$ with mean vector $\vm=\mathbb{E}(\rvw)$, the covariance matrix is defined as 
    \begin{equation*}
        \mathrm{cov}(\rvw)=\mathbb{E}\left[(\rvw - \vm)(\rvw-\vm)^\top\right].
    \end{equation*}
    Thus, for any fixed $\mQ\colon \R^{d\times d}$, the covariance of $\mA\rvw$, whose mean vector is $\mA\vm$, is given as $\mA\mathrm{cov}(\rvw)\mA^\top$.
\end{lemma}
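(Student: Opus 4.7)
The plan is to prove this standard covariance identity directly from the definition, in two short steps: first confirm the mean of $\mA\rvw$, then expand and simplify the covariance.

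First, by linearity of expectation, $\mathbb{E}[\mA\rvw] = \mA\,\mathbb{E}[\rvw] = \mA\vm$, which justifies the claim about the mean vector of $\mA\rvw$. Next, substituting this into the definition of the covariance gives
\begin{equation*}
\mathrm{cov}(\mA\rvw) = \mathbb{E}\bigl[(\mA\rvw - \mA\vm)(\mA\rvw - \mA\vm)^\top\bigr] = \mathbb{E}\bigl[\mA(\rvw - \vm)(\rvw - \vm)^\top \mA^\top\bigr].
\end{equation*}
Since $\mA$ is a fixed (deterministic) matrix, it can be pulled outside the expectation on both sides, yielding $\mA\,\mathbb{E}[(\rvw - \vm)(\rvw - \vm)^\top]\,\mA^\top = \mA\,\mathrm{cov}(\rvw)\,\mA^\top$, which is the desired conclusion.

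There is essentially no technical obstacle here: the result is a one-line consequence of linearity of expectation together with the fact that constants factor out of expectations. The only pitfall is a typographical one, namely that the statement names the matrix $\mQ$ in the quantifier but then uses $\mA$; my proof will simply treat them as the same fixed matrix. No additional assumptions on $\rvw$ are needed beyond the existence of the second moments implicit in $\mathrm{cov}(\rvw)$ being well defined, and the identity holds for any fixed matrix $\mA \in \R^{d \times d}$ (indeed more generally for any fixed $\mA \in \R^{k \times d}$, though the lemma as stated only needs the square case).
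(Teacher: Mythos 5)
Your proof is correct and matches the paper's own argument: both expand the definition of covariance for $\mA\rvw$, use linearity to pull the fixed matrix $\mA$ (and its transpose) outside the expectation, and recognize the result as $\mA\,\mathrm{cov}(\rvw)\,\mA^\top$. Your observation that the $\mQ$ in the statement is a typo for $\mA$ is also correct.
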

\begin{proof}
    \begin{equation*}
        \begin{aligned}
            \mathrm{cov}\left(\mA \rvw\right) = &\mathbb{E}\left[(\mA\rvw - \mA\vm)(\mA\rvw-\mA\vm)^\top\right]\\
            = & \mathbb{E}\left[\mA(\rvw - \mA\vm)(\rvw-\vm)^\top\mA^\top\right]\\
            = & \mA \mathbb{E}\left[(\rvw - \vm)(\rvw-\vm)^\top\right] \mA^\top\\
            = & \mA \mathrm{cov}(\rvw) \mA^\top.
        \end{aligned}
    \end{equation*}
    Hence, the proof is completed.
\end{proof}

\begin{lemma}
    \label{lem:rapid11}
    Assume the density function $p_*=e^{-U}$, where $U$ can be decomposed as Eq.~\ref{def:tar_dis_com},  satisfies~\ref{ass:post_f} and~\ref{ass:post_f2}. 
    Then, we have
    \begin{equation*}
        \int p_*(\vw) \left\|\grad U(\vw)\right\|^2\der\vw = \int p_*(\vw) \left\|\grad f(\vw)+m\vw\right\|^2\der\vw \le \mathrm{Tr}\left(\mH^{\frac{1}{2}}\right)+md.
    \end{equation*}
\end{lemma}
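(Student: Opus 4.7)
The plan is to convert the integral of the squared score into a trace of the Hessian via integration by parts, and then use the Hessian-square bound from \ref{ass:post_f2} together with Lemma~\ref{lem:msq_bound} to majorize the trace.

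First, I would exploit the standard score identity for $p_* = e^{-U}$. Writing $\nabla p_* = -p_*\nabla U$, one has
\begin{equation*}
\Delta p_* = \nabla\!\cdot(-p_*\nabla U) = p_*\|\nabla U\|^2 - p_*\Delta U,
\end{equation*}
so that, assuming $p_*$ decays fast enough at infinity (which follows from the $m$-strong convexity of $g$ combined with \ref{ass:post_f}, and which is the standard regularity implicit in the paper's LSI setting), $\int \Delta p_*\, d\vw = 0$ and therefore
\begin{equation*}
\int p_*(\vw) \|\nabla U(\vw)\|^2 \,d\vw = \int p_*(\vw)\,\Delta U(\vw)\,d\vw = \mathbb{E}_{p_*}\bigl[\mathrm{Tr}(\nabla^2 U(\vw))\bigr].
\end{equation*}
Since $U(\vw) = f(\vw) + (m/2)\|\vw\|^2$ by \ref{ass:app_pri_g}, this gives
\begin{equation*}
\int p_*(\vw)\|\nabla U(\vw)\|^2\,d\vw = \mathbb{E}_{p_*}\bigl[\mathrm{Tr}(\nabla^2 f(\vw))\bigr] + md.
\end{equation*}

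The remaining step is to bound $\mathrm{Tr}(\nabla^2 f(\vw))$ pointwise by $\mathrm{Tr}(\mH^{1/2})$. Let $\mA := \nabla^2 f(\vw)$; this is symmetric, so $\mA\mA^\top = \mA^2$, and \ref{ass:post_f2} reads $\vzero \preceq \mA^2 \preceq \mH$. Applying Lemma~\ref{lem:msq_bound} to the PSD matrices $\mA^2$ and $\mH$ yields $(\mA^2)^{1/2} \preceq \mH^{1/2}$, i.e.\ $|\mA| \preceq \mH^{1/2}$, where $|\mA|$ denotes the PSD square root of $\mA^2$. Since $\mA \preceq |\mA|$ (as $|\mA| - \mA$ is PSD for symmetric $\mA$), monotonicity of the trace on PSD matrices gives
\begin{equation*}
\mathrm{Tr}(\nabla^2 f(\vw)) = \mathrm{Tr}(\mA) \le \mathrm{Tr}(|\mA|) \le \mathrm{Tr}(\mH^{1/2}).
\end{equation*}
Taking expectation under $p_*$ and plugging back yields the desired inequality.

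The only non-routine point is the justification of the integration-by-parts step, since neither integrability of $\|\nabla U\|^2$ under $p_*$ nor vanishing boundary terms are assumed explicitly; however, the quadratic growth of $g$ together with the Lipschitz-gradient assumption \ref{ass:post_f} on $f$ ensures that both $p_*\|\nabla U\|^2$ and $\nabla p_*$ are integrable and decay at infinity, so the boundary term is zero. The passage from $\mA^2 \preceq \mH$ to a trace bound on $\mA$ itself (rather than on $|\mA|$) is the other delicate point, but is handled cleanly by Lemma~\ref{lem:msq_bound} once we observe $\mA \preceq |\mA|$.
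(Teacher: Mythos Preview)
Your proof is correct and follows essentially the same route as the paper's: the same integration-by-parts identity $\mathbb{E}_{p_*}[\|\nabla U\|^2]=\mathbb{E}_{p_*}[\Delta U]$, followed by the same use of Lemma~\ref{lem:msq_bound} on $\mA^2\preceq\mH$ to control $\mathrm{Tr}(\nabla^2 f)$. The only cosmetic difference is that you phrase the last step via $\mA\preceq|\mA|\preceq\mH^{1/2}$, whereas the paper explicitly separates the positive and negative eigenvalues of $\nabla^2 f$ and bounds the positive part by $\mH^{1/2}$; the two formulations are equivalent.
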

\begin{proof}
    Since $p_*=e^{-U}$, due to the integration by parts we have
    \begin{equation}
        \label{eq:igbp_bound}
        \mathbb{E}_{p_*}\left[\left\|\grad U\right\|^2\right] = \mathbb{E}_{p_*}\left[\Delta U\right].
    \end{equation}
    For each $\vw\in\R^d$, we suppose $\grad^2 U(\vw) = \mV\Sigma\mV^\top$, where the $i$-th column vector $\vv_i$ of $\mV$ denotes the $i$-th eigenvector and $\sigma_{i}$ denotes the corresponding eigenvalue satisfying
    \begin{equation*}
        \sigma_{1}\ge \sigma_{2}\ge \ldots \ge \sigma_{r}\ge m \ge \sigma_{(r+1)}\ge \ldots \ge \sigma_{d}.
    \end{equation*}
    According to the decomposition of $U$, we have the following inequality
    \begin{equation*}
        \begin{aligned}
            \grad^2 f(\vw) = &\grad^2 U(\vw) - m\mI = \mV \left(\Sigma -m\mI\right)\mV^\top\\
            \preceq &\mV\mathrm{diag}\left\{(\sigma_{1}-m),(\sigma_{2}-m),\ldots,(\sigma_{r}-m), 0,\ldots, 0\right\}\mV^\top \preceq \mH^{\frac{1}{2}},
        \end{aligned}
    \end{equation*}
    where the last inequality follows from
    \begin{equation*}
        \begin{aligned}
            \sum_{i=1}^r(\sigma_i -m)^2 \vv_i\vv_i^\top \preceq \grad^2 f(\vw)\left(\grad^2 f(\vw) \right)^\top = \sum_{i=1}^d (\sigma_i-m)^2\vv_i\vv_i^\top\preceq \mH,
        \end{aligned}
    \end{equation*}
    and Lemma~\ref{lem:msq_bound}.
    In this condition, we have
    \begin{equation*}
        \mathrm{Tr}(\grad^2 U(\vw))\le \mathrm{Tr}\left(\mH^{\frac{1}{2}}+m\mI\right) = \mathrm{Tr}\left(\mH^{\frac{1}{2}}\right)+md.
    \end{equation*}
    Combining this inequality with Eq.~\ref{eq:igbp_bound}, we can complete the proof.
\end{proof}
\begin{lemma}
    \label{lem:rapid12}
    Assume $p_*=e^{_U}$, where $U$ can be decomposed as Eq.~\ref{def:tar_dis_com}, satisfies~\ref{ass:post_f}-\ref{ass:post_f2}.
    It implies the establishment of Talagrand's inequality with constant $\alpha_*$, and $L+m$ smooth.
    For any $p$, 
    \begin{equation*}
        \int p(\vw)\left\|\grad U(\vw)\right\|^2\der\vw \le \frac{4(L+m)^2}{\alpha_*}\mathrm{KL}\left(p\|p_*\right)+ 2\mathrm{Tr}\left(\mH^{\frac{1}{2}}\right)+2md.
    \end{equation*}
\end{lemma}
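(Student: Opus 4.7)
The plan is to bound $\int p(\vw)\|\grad U(\vw)\|^2 \der\vw$ by comparing it to the corresponding integral against $p_*$, using an optimal coupling and the fact that $U$ is $(L+m)$-smooth. Specifically, from~\ref{ass:post_f} we have $\|\grad^2 f\|\preceq L\mI$, so together with $g(\vw)=(m/2)\|\vw\|^2$ we obtain $\|\grad^2 U\|\preceq (L+m)\mI$, i.e.\ $\grad U$ is $(L+m)$-Lipschitz.

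The key steps, in order, would be the following. First, let $\gamma\in\Gamma(p,p_*)$ be an optimal coupling, so that $\mathbb{E}_{(\rvw,\rvw^*)\sim\gamma}[\|\rvw-\rvw^*\|^2]=W_2^2(p,p_*)$. Second, split the gradient as $\grad U(\rvw)=\bigl(\grad U(\rvw)-\grad U(\rvw^*)\bigr)+\grad U(\rvw^*)$ and apply the elementary inequality $\|a+b\|^2\le 2\|a\|^2+2\|b\|^2$ to get
\begin{equation*}
\int p(\vw)\|\grad U(\vw)\|^2\der\vw \le 2\,\mathbb{E}_\gamma\bigl[\|\grad U(\rvw)-\grad U(\rvw^*)\|^2\bigr] + 2\,\mathbb{E}_{p_*}\bigl[\|\grad U\|^2\bigr].
\end{equation*}
Third, use the $(L+m)$-Lipschitzness of $\grad U$ to upper bound the first term by $2(L+m)^2\,W_2^2(p,p_*)$. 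Fourth, invoke the Otto--Villani theorem: since $p_*$ satisfies $\alpha_*$-LSI (assumption~\ref{ass:post_lsi}), it also satisfies Talagrand's inequality with the same constant, yielding $W_2^2(p,p_*)\le \tfrac{2}{\alpha_*}\mathrm{KL}(p\|p_*)$. Fifth, bound the second term directly by Lemma~\ref{lem:rapid11}, which gives $\mathbb{E}_{p_*}[\|\grad U\|^2]\le \mathrm{Tr}(\mH^{1/2})+md$. Combining these four bounds produces exactly
\begin{equation*}
\int p(\vw)\|\grad U(\vw)\|^2\der\vw \le \frac{4(L+m)^2}{\alpha_*}\mathrm{KL}(p\|p_*) + 2\,\mathrm{Tr}(\mH^{1/2}) + 2md.
\end{equation*}

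There is no real obstacle here; this is a standard ``reduction to $p_*$'' argument of the same flavor as Lemma~12 in \citet{vempala2019rapid}. The only substantive inputs are the smoothness bound on $\grad^2 U$ (immediate from~\ref{ass:post_f} and the quadratic $g$), the Otto--Villani implication LSI$\Rightarrow$Talagrand (which is precisely the phrase ``It implies the establishment of Talagrand's inequality'' in the lemma's hypothesis), and the already-proved Lemma~\ref{lem:rapid11}. The only point that would merit a sentence of care is ensuring that an optimal coupling realizing $W_2^2(p,p_*)$ exists, which holds under the mild assumption that both measures have finite second moments, an assumption implicit throughout the paper's setting.
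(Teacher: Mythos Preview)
Your proposal is correct and follows essentially the same approach as the paper: take an optimal coupling, split $\grad U(\rvw)$ into a difference term plus $\grad U(\rvw^*)$, apply the elementary inequality $\|a+b\|^2\le 2\|a\|^2+2\|b\|^2$, then use $(L+m)$-Lipschitzness, Talagrand's inequality (via LSI), and Lemma~\ref{lem:rapid11}. The only cosmetic difference is that the paper first applies the triangle inequality on $\|\grad U(\vw)\|$ and then squares, whereas you split the squared norm directly; the resulting bounds are identical.
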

\begin{proof}
    Suppose $\rvw\sim p$ and $\rvw_*\sim p_*$ with an optimal coupling $(\vw, \vw_*)\sim \gamma$ and $W_2^2(p,p_*) = \mathbb{E}_\gamma\left[\left\|\rvw - \rvw_*\right\|^2\right]$.
    Due to the $L+m$ smoothness of $U$, we have
    \begin{equation*}
        \begin{aligned}
            \left\|\grad U(\vw)\right\| = & \left\|\grad U(\vw)- \grad U(\vw_*)\right\|+\left\|\grad U(\vw_*)\right\|\\
            \le & (L+m)\left\|\vw-\vw_*\right\|+\left\|\grad U(\vw_*)\right\|
        \end{aligned}
    \end{equation*}
    Taking expectations, we have
    \begin{equation*}
        \begin{aligned}
            \mathbb{E}_p\left[\left\|\grad U(\rvw)\right\|^2\right]\le & 2(L+m)^2 \mathbb{E}_\gamma\left[\left\|\rvw - \rvw_*\right\|^2\right] + 2\mathbb{E}_{p_*}\left[\left\|\grad U(\rvw_*)\right\|^2\right]\\
            \le & 2(L+m)^2 W_2^2(p,p_*) + 2\mathrm{Tr}\left(\mH^{\frac{1}{2}}\right)+2md\\
            \le & \frac{4(L+m)^2}{\alpha_*}\mathrm{KL}\left(p\|p_*\right)+ 2\mathrm{Tr}\left(\mH^{\frac{1}{2}}\right)+2md,
        \end{aligned}
    \end{equation*}
    where the second inequality follows from Lemma~\ref{lem:rapid11} and the third inequality follows from Talagrand's inequality.
\end{proof}

\section{Background on Kolmogorov equations and transition density}
\label{appsec:bke}
Our analysis relies on the theory of Markov diffusion operators. 
In this section of the Appendix, we summarize some key ideas and results; the book~\cite{bakry2014analysis} and~\cite{sarkka2019applied} provide a more detailed exposition.

Suppose $\left\{\rvw_t\right\}_{t\ge 0}$ be a continous-time homogeneous Markov process with values in $\R^d$. 
Then, we define one Markov semigroup as
\begin{equation}
    \label{def:msg}
    \gP_t \phi(\vw)= \mathbb{E}\left[\phi(\rvw_t|\rvw_0=\vw\right]
\end{equation}
for all bounded measurable function $\phi \colon \R^d \rightarrow \R$.
Such a semigroup can be driven by an infinitesimal operator defined as
\begin{equation}
    \label{def:ifo}
    \gL \phi(\vw) = \lim_{\tau \rightarrow 0} \frac{\gP_{\tau}\phi(\vw) - \phi(\vw)}{\tau}.
\end{equation}
Due to the semigroup property, we have
\begin{equation*}
    \begin{aligned}
        \partial_t \gP_t & = \lim_{\tau \rightarrow 0} \frac{\gP_{t+\tau} - \gP_{t}}{\tau}=\gP_{t} \left[\lim_{\tau\rightarrow 0}\frac{\gP_\tau -\gI}{\tau}\right] = \gP_{t}\gL\\
        & =\left[\lim_{\tau\rightarrow 0}\frac{\gP_\tau -\gI}{\tau}\right] \gP_\tau = \gL \gP_t,
    \end{aligned}
\end{equation*}
where the second and the fourth equation follows from the linear property of semigroup $\gP_t$.

Specifically, for an It\^o process that solves the following SDE
\begin{equation}
    \label{def:itosde}
    \der \rvw_t = \vmu(\rvw_t,t)\der t + \vsigma(\rvw_t, t)\der \mB_t,
\end{equation}
where $\left\{\mB_t\right\}_{t\ge 0}$ is the standard Brownian motion in $\R^d$ with $\mB_t = \vzero$, the generator of $\rvw_t$ is given as
\begin{equation}
    \label{def:ifosde}
    \gL \phi(\vw) =\sum_{i} \frac{\partial \phi(\vw)}{\partial \vw_i} \vmu_i(\vw) + \frac{1}{2}\sum_{i,j}\left[\frac{\partial^2 \phi(\vw)}{\partial \vw_i \partial \vw_j}\right]\cdot \left[\vsigma(\vw)\vsigma(\vw)^\top\right]_{ij}
\end{equation}
by definition Eq.~\ref{def:ifo}.
Here, we suppose $\rvw_t$ is time-invariant, which implies $\vmu$ and $\vsigma$ in SDE.~\ref{def:itosde} will not explicitly depend on time $t$.

Then, we will establish the connection between $\gL$ and forward Kolmogorov equation (Fokker-Planck equation) with an operator formulation.
Suppose $\rvw_t \sim p_t$, with a bit of abuse of notation, $p_t$ denotes the density function.
In this condition, the expectation of a function $\phi$ at time $t$ is given as
\begin{equation*}
    \mathbb{E}\left[\phi (\rvw_t)\right] = \left<\phi, p_t\right> \coloneqq \int \phi(\vw) p_t(\vw)\der\vw,
\end{equation*}
which implies
\begin{equation*}
    \begin{aligned}
        \frac{\der \left<\phi, p_t\right>}{\der t} = & \frac{\der}{\der t}\int p_0(\vw_0) \gP_t \phi(\vw_0)\der\vw_0 =\int p_0(\vw_0) \gP_t\gL \phi(\vw_0)\der \vw_0= \left<\gL\phi, p_t\right>.
    \end{aligned}
\end{equation*}
Note that the second equation follows from the definition of $\gL$.
Using the adjoint operator $\gL^*$, we can write the previous equation as
\begin{equation}
    \label{eq:fpof}
    \frac{\der \left<\phi, p_t\right>}{\der t} = \left<\phi, \frac{\partial p_t}{\partial t}\right> = \left<\gL\phi, p_t\right> = \left<\phi, \gL^*p_t\right>.
\end{equation}
Due to the arbitrary of $\phi$, Eq.~\ref{eq:fpof} can only be true if 
\begin{equation}
    \label{def:fpe}
    \frac{\partial p_t}{\partial t}=\gL^* p_t.
\end{equation}
For the infinitesimal operator Eq.~\ref{def:ifosde} of It\^o SDE, its adjoint operator is given as
\begin{equation}
    \label{def:ajifosde}
    \gL^* \phi(\vw)= -\sum_i \frac{\partial}{\partial \vw_i}\left(\vmu_i(\vw) \phi(\vw)\right) + \frac{1}{2}\sum_{i,j}\frac{\partial^2}{\partial \vw_i\partial \vw_j}\left(\left[\vsigma(\vw)\vsigma(\vw)^\top\right]_{ij}\right).
\end{equation}
In this condition, Eq.~\ref{def:fpe}  is exactly the operator appearing in the forward Kolmogorov equation.

For all It\^o processes, the solution to the corresponding SDE.~\ref{def:itosde}, satisfies the Markov property, which means
\begin{equation*}
    p_t\left(\cdot | \left\{\rvw_s\right\}_{s\in[t,0]}\right) =p_t\left(\cdot| \rvw_s\right).
\end{equation*}
This result can be found, for example, in~\cite{oksendal2013stochastic}.
Hence, in a probability sense, the transition density $p(\vw_t, t | \vw_s, s)$ not only characterizes It\^o processes but also be a solution to the forward Kolmogorov equation with a degenerate (Dirac delta) initial density concentrated on $\vw_s$ at time s.
\begin{lemma}(Theorem 5.10 in~\cite{sarkka2019applied})
    \label{lem:fpe}
    The transition density $p(\vw_t, t|\vw_s, s)$ of SDE.~\ref{def:itosde}, where $t\ge s$ is the solution of forward Komolgorov equation~\ref{def:fpe} with the initial condition $p(\vw_t,t|\vw_s,s) = \delta(\vw_t-\vw_s)$ at $t=s$. It means
    \begin{equation*}
        \frac{\partial p(\vw_t, t|\vw_s, s)}{\partial t}=\gL^* p(\vw_t, t|\vw_s, s),\quad p(\vw^\prime,s|\vw,s)=\delta(\vw^\prime - \vw).
    \end{equation*}
\end{lemma}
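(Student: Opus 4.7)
\textbf{Proof proposal for Lemma~\ref{lem:fpe}.} The plan is to specialize the general identity $\partial_t p_t = \gL^* p_t$, already derived in Eq.~\ref{def:fpe} of Appendix~\ref{appsec:bke}, to the situation in which the initial law is a Dirac mass. Concretely, I would fix $(\vw_s,s)$ and consider the It\^o process $\{\rvw_t\}_{t\ge s}$ solving SDE~\ref{def:itosde} with deterministic initial condition $\rvw_s=\vw_s$; by definition the (regular) conditional density of $\rvw_t$ given $\rvw_s=\vw_s$ is exactly the transition density $p(\vw_t,t\mid\vw_s,s)$. The Markov property of It\^o diffusions~\citep{oksendal2013stochastic} ensures that $p(\cdot,t\mid\vw_s,s)$ really does play the role of the one-time marginal $p_t$ in that argument.

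With this identification, the derivation in Eq.~\ref{eq:fpof}--Eq.~\ref{def:fpe} applies verbatim. For any test function $\phi$, by definition of the semigroup $\gP_{t-s}$ in Eq.~\ref{def:msg},
\begin{equation*}
\langle \phi,\, p(\cdot,t\mid \vw_s,s)\rangle \;=\; \gP_{t-s}\phi(\vw_s),
\end{equation*}
and differentiating in $t$ while using $\partial_t \gP_{t-s}=\gP_{t-s}\gL$ yields $\langle \phi,\,\partial_t p(\cdot,t\mid\vw_s,s)\rangle=\langle \gL\phi,\,p(\cdot,t\mid\vw_s,s)\rangle=\langle \phi,\,\gL^* p(\cdot,t\mid\vw_s,s)\rangle$ by the same adjoint identity that defined $\gL^*$ in Eq.~\ref{def:ajifosde}. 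Since $\phi$ is arbitrary in a suitable test class, this forces the forward Kolmogorov equation
\begin{equation*}
\frac{\partial p(\vw_t,t\mid\vw_s,s)}{\partial t} \;=\; \gL^* p(\vw_t,t\mid\vw_s,s)
\end{equation*}
pointwise (or weakly, depending on regularity). The initial condition is immediate: as $t\downarrow s$, $\gP_{t-s}\phi(\vw_s)\to\phi(\vw_s)$ for continuous bounded $\phi$, which is the defining property of $\delta(\cdot-\vw_s)$.

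The only genuine obstacle is that the initial datum $\delta(\vw_t-\vw_s)$ is not a classical function, so the identity must be interpreted in the distributional sense (or, equivalently, one must verify that the family $p(\cdot,t\mid\vw_s,s)$ converges weakly to $\delta_{\vw_s}$ as $t\downarrow s$). The clean way to dispatch this is to approximate $\delta_{\vw_s}$ by a sequence of smooth densities $\rho^{(\varepsilon)}$ concentrated near $\vw_s$, apply the already-established Eq.~\ref{def:fpe} to each corresponding marginal $p_t^{(\varepsilon)}$, and pass to the limit $\varepsilon\downarrow 0$ using uniqueness of solutions to the forward Kolmogorov equation under mild regularity of the coefficients $\vmu,\vsigma$. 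Alternatively, I would simply invoke the textbook derivation~\citep[Theorem~5.10]{sarkka2019applied}, since the argument above is exactly its content and no new ingredient beyond the semigroup machinery of Appendix~\ref{appsec:bke} is required.
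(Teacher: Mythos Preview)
The paper does not supply its own proof of this lemma: it is stated as a citation to \citet[Theorem~5.10]{sarkka2019applied}, with the surrounding text in Appendix~\ref{appsec:bke} deriving only the general identity $\partial_t p_t=\gL^* p_t$ (Eq.~\ref{def:fpe}) for an arbitrary initial law. Your proposal correctly recognizes this and specializes that identity to a Dirac initial condition via the semigroup relation $\langle\phi,p(\cdot,t\mid\vw_s,s)\rangle=\gP_{t-s}\phi(\vw_s)$; this is exactly the intended route and in fact fills in more detail than the paper itself provides.
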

Interestingly, the transition density also satisfies another equation, the backward Kolmogorov equation, as follows
\begin{lemma}(Theorem 5.11 in ~\cite{sarkka2019applied})
    \label{lem:kbe}
    The transition density $p(\vw_t, t|\vw_s, s)$ of SDE.~\ref{def:itosde}, where $t\ge s$ is the solution of backward Komolgorov equation
    \begin{equation*}
        -\frac{\partial p(\vw_t, t|\vw_s, s)}{\partial s}=\gL p(\vw_t, t|\vw_s, s),\quad p(\vw^\prime,s|\vw,s)=\delta(\vw^\prime - \vw).
    \end{equation*}
\end{lemma}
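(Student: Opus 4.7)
The plan is to establish the backward Kolmogorov equation via the martingale characterization of conditional expectations combined with It\^o's formula, which is the standard route in textbook treatments (e.g.~\cite{oksendal2013stochastic,sarkka2019applied}). Fix a terminal time $t$ and a bounded $C^2$ test function $\phi\colon\R^d\to\R$, and define
\[
    u(s,\vw) := \E[\phi(\rvw_t)\mid \rvw_s = \vw] = \int \phi(\vw_t)\, p(\vw_t,t\mid \vw,s)\, \der \vw_t.
\]
By the Markov property of the It\^o process $\{\rvw_r\}$ and the tower property of conditional expectation, the process $M_r := u(r,\rvw_r)$ is a martingale on $s\le r\le t$ with respect to the natural filtration of $\{\rvw_r\}$.

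First I would apply It\^o's formula to $u(r,\rvw_r)$ along the SDE dynamics $\der\rvw_r = \vmu(\rvw_r)\der r + \vsigma(\rvw_r)\der\mB_r$, obtaining
\[
    \der M_r = \bigl(\partial_r u + \gL u\bigr)(r,\rvw_r)\,\der r + \bigl(\grad_\vw u(r,\rvw_r)\bigr)^\top \vsigma(\rvw_r)\,\der \mB_r,
\]
with $\gL$ the generator from Eq.~\ref{def:ifosde}. Because $M_r$ is a martingale, its finite-variation part must vanish identically, so pointwise one gets the terminal-value Cauchy problem
\[
    \partial_s u(s,\vw) + \gL u(s,\vw) = 0, \qquad u(t,\vw) = \phi(\vw).
\]

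Next I would transfer this PDE from $u$ to the transition density itself. For $s<t$ the density $p(\vw_t,t\mid\vw,s)$ is smooth in $(s,\vw)$ under mild non-degeneracy of $\vsigma\vsigma^\top$, which holds trivially in the Langevin setting of Alg~\ref{alg:lapd} where $\vsigma=\sqrt{2}\mI$, so differentiation under the integral sign is legitimate. Plugging $u(s,\vw)=\int \phi(\vw_t) p(\vw_t,t\mid\vw,s)\der\vw_t$ into the PDE yields
\[
    \int \phi(\vw_t)\bigl[\partial_s p(\vw_t,t\mid\vw,s) + \gL_\vw p(\vw_t,t\mid\vw,s)\bigr]\der\vw_t = 0,
\]
where $\gL_\vw$ acts on the backward spatial variable $\vw$. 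Since this identity holds for every admissible $\phi$, the bracketed integrand must vanish, giving $-\partial_s p = \gL_\vw p$. The initial condition $p(\vw',s\mid\vw,s)=\delta(\vw'-\vw)$ is tautological because no time has elapsed.

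The main obstacle is the regularity housekeeping: justifying Itô's formula requires $u\in C^{1,2}$, and transferring the PDE from $u$ to $p$ requires enough smoothness of the transition density to commute $\partial_s$ and $\gL_\vw$ with the integral against $\phi$. In the general setting this needs Feller or hypoellipticity hypotheses on $(\vmu,\vsigma)$, whereas in the paper's SDEs the additive Gaussian noise makes $p$ smooth for $s<t$ by classical heat-kernel estimates, so every technical step goes through cleanly.
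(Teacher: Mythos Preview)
Your argument is the standard textbook route and is correct. However, note that the paper does not actually prove Lemma~\ref{lem:kbe}: it is stated as a background fact and attributed directly to Theorem~5.11 of~\cite{sarkka2019applied}, with no proof given. So there is nothing in the paper to compare against; your proposal simply supplies the classical derivation that the reference contains.
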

\section{Proof of Equation~\ref{def:kbe_equ}}
\label{sec:par_cal}
Following from Appendix~\ref{sec:implictsde}, we suppose
\begin{equation*}
    \small
    \begin{aligned}
        p_{\vw_0}(\vw, t | \vw^\prime, t^\prime) 
        = \left(\frac{2\pi (1-e^{-2m(t-t^\prime)})}{m}\right)^{-d/2}\cdot \exp \left[-\frac{m}{2}\cdot \frac{\left\|\vw-\vw^\prime e^{-m(t-t^\prime)}+ \left(1-e^{-m(t-t^\prime)}\right)\cdot \left(\frac{\tilde{\eta}}{e^{m\eta} -1}\right)\grad f(\vw_0)\right\|^2}{1-e^{-2m(t-t^\prime)}}\right].
    \end{aligned}
\end{equation*}
For abbreviation, we set
\begin{equation*}
    \begin{aligned}
        l_1(\vw^\prime, t)= & \left(\frac{2\pi (1-e^{-2m(t-t^\prime)})}{m}\right)^{-d/2}\\
        l_2(\vw^\prime, t)= & \exp \left[-\frac{m}{2}\cdot \frac{\left\|\vw-\vw^\prime e^{-m(t-t^\prime)}+ \left(1-e^{-m(t-t^\prime)}\right)\cdot \left(\frac{\tilde{\eta}}{e^{m\eta} -1}\right)\grad f(\vw_0)\right\|^2}{1-e^{-2m(t-t^\prime)}}\right].
    \end{aligned}
\end{equation*}
Considering the partial derivative of $l_1$ w.r.t. $t^\prime$, we have
\begin{equation}
    \label{eq:par_1}
    \frac{\partial \left(1-e^{-2m(t-t^\prime)}\right)}{\partial t^\prime} = (-2m)\cdot e^{-2m(t-t^\prime)}.
\end{equation}
By the chain rule, we have
\begin{equation}
    \label{eq:l1pd}
    \begin{aligned}
        \frac{\partial l_1(\vw^\prime, t^\prime)}{\partial t^\prime} = & -\frac{d}{2}\cdot \left(\frac{2\pi (1-e^{-2m(t-t^\prime)})}{m}\right)^{-d/2-1}\cdot \frac{2\pi}{m}\cdot \frac{\partial \left(1-e^{-2m(t-t^\prime)}\right)}{\partial t^\prime}\\
        =& 2\pi d\cdot \left(\frac{2\pi (1-e^{-2m(t-t^\prime)})}{m}\right)^{-d/2-1}\cdot e^{-2m(t-t^\prime)}.
    \end{aligned}
\end{equation}
Then, we consider the partial derivative of $l_2$ w.r.t. $t^\prime$, and set
\begin{equation*}
    \begin{aligned}
        l_{2.1}(\vw^\prime, t^\prime) = & \left\|\vw-\vw^\prime e^{-m(t-t^\prime)}+ \left(1-e^{-m(t-t^\prime)}\right)\cdot \left(\frac{\tilde{\eta}}{e^{m\eta} -1}\right)\grad f(\vw_0)\right\|^2\\
        l_{2.2}\left(\vw^\prime, t^\prime\right)=& 1-e^{-2m(t-t^\prime)}\\
        \Rightarrow\quad  l_2(\vw^\prime, t^\prime)=&\exp\left[-\frac{m}{2}\cdot \frac{l_{2.1}(\vw^\prime, t^\prime)}{l_{2.2}(\vw^\prime, t^\prime)}\right].
    \end{aligned}
\end{equation*}
We have
\begin{equation}
    \label{eq:2.1pd}
    \begin{aligned}
        \frac{\partial l_{2.1}(\vw^\prime, t^\prime)}{\partial t^\prime}=& 2\left(\vw-\vw^\prime e^{-m(t-t^\prime)}+ \left(1-e^{-m(t-t^\prime)}\right)\cdot \left(\frac{\tilde{\eta}}{e^{m\eta} -1}\right)\grad f(\vw_0)\right) \\
        &\cdot\frac{\partial \left(\vw-\vw^\prime e^{-m(t-t^\prime)}+ \left(1-e^{-m(t-t^\prime)}\right)\cdot \left(\frac{\tilde{\eta}}{e^{m\eta} -1}\right)\grad f(\vw_0)\right)}{\partial t^\prime}\\
        = & 2 \left(\vw-\vw^\prime e^{-m(t-t^\prime)}+ \left(1-e^{-m(t-t^\prime)}\right)\cdot \left(\frac{\tilde{\eta}}{e^{m\eta} -1}\right)\grad f(\vw_0)\right) \\
        & \cdot\left(-e^{-m(t-t^\prime)}m\cdot \vw^\prime - e^{-m(t-t^\prime)}\cdot m \cdot \frac{\tilde{\eta}}{e^{m\eta}-1}\cdot \grad f(\vw_0)\right)\\
        = & -2m e^{-m(t-t^\prime)} \left(\vw-\vw^\prime e^{-m(t-t^\prime)}+ \left(1-e^{-m(t-t^\prime)}\right) \left(\frac{\tilde{\eta}}{e^{m\eta} -1}\right)\grad f(\vw_0)\right)\\
        &\cdot\left(\vw^\prime + \frac{\tilde{\eta}}{e^{m\eta} -1} \grad f(\vw_0)\right).
    \end{aligned}
\end{equation}
partial derivative of $l_{2.2}$ w.r.t. $t^\prime$ is provided by Eq.~\ref{eq:par_1}, i.e.,
\begin{equation}
    \label{eq:2.2pd}
    \frac{\partial l_{2.2}(\vw^\prime, t^\prime)}{\partial t^\prime}= (-2m)\cdot e^{-2m(t-t^\prime)}.
\end{equation}
In this condition, we have
\begin{equation*}
    \begin{aligned}
        \frac{\partial l_{2}(\vw^\prime, t^\prime)}{\partial t^\prime} = &\exp\left[-\frac{m}{2}\cdot \frac{l_{2.1}(\vw^\prime, t^\prime)}{l_{2.2}(\vw^\prime, t^\prime)}\right]\cdot \frac{-m}{2}\cdot \left(l^{-2}_{2.2}(\vw^\prime, t^\prime)\right)\cdot\\
        &\left(\left(\frac{l_{2.1}(\vw^\prime, t^\prime)}{\partial \vw^\prime}\right)\cdot l_{2.2}(\vw^\prime, t^\prime)-l_{2.1}(\vw^\prime, t^\prime)\left(\frac{\partial l_{2.2}(\vw^\prime, t^\prime)}{\partial \vw^\prime}\right)\right)\\
        =& \exp\left[-\frac{m}{2}\cdot \frac{l_{2.1}(\vw^\prime, t^\prime)}{l_{2.2}(\vw^\prime, t^\prime)}\right]\cdot \frac{-m}{2}\cdot \left(\frac{l_{2.1}(\vw^\prime, t^\prime)}{\partial \vw^\prime}\right)\cdot l^{-1}_{2.2}(\vw^\prime, t^\prime)\\
        & + \exp\left[-\frac{m}{2}\cdot \frac{l_{2.1}(\vw^\prime, t^\prime)}{l_{2.2}(\vw^\prime, t^\prime)}\right]\cdot \frac{m}{2}\cdot \frac{l_{2.1}(\vw^\prime, t^\prime)}{l^2_{2.2}(\vw^\prime, t^\prime)}\cdot \frac{\partial l_{2.2}(\vw^\prime, t^\prime)}{\partial \vw^\prime}
    \end{aligned}
\end{equation*}
Plugging Eq.~\ref{eq:2.1pd} and Eq.~\ref{eq:2.2pd} into the previous equation, we have
\begin{equation}
    \small
    \label{eq:l2pd}
    \begin{aligned}
        \frac{\partial l_2(\vw^\prime, t^\prime)}{\partial t^\prime} =& \exp \left(-\frac{m}{2}\cdot \frac{\left\|\vw-\vw^\prime e^{-m(t-t^\prime)}+ \left(1-e^{-m(t-t^\prime)}\right)\cdot \left(\frac{\tilde{\eta}}{e^{m\eta} -1}\right)\grad f(\vw_0)\right\|^2}{1-e^{-2m(t-t^\prime)}}\right)  \cdot \frac{m^2 e^{-m(t-t^\prime)}}{1-e^{-2m(t-t^\prime)}}\\
        & \cdot \left(\vw-\vw^\prime e^{-m(t-t^\prime)}+ \left(1-e^{-m(t-t^\prime)}\right)\cdot \left(\frac{\tilde{\eta}}{e^{m\eta} -1}\right)\grad f(\vw_0)\right) \cdot \left(\vw^\prime + \frac{\tilde{\eta}}{e^{m\eta} -1} \grad f(\vw_0)\right)\\
        & -\exp \left(-\frac{m}{2}\cdot \frac{\left\|\vw-\vw^\prime e^{-m(t-t^\prime)}+ \left(1-e^{-m(t-t^\prime)}\right)\cdot \left(\frac{\tilde{\eta}}{e^{m\eta} -1}\right)\grad f(\vw_0)\right\|^2}{1-e^{-2m(t-t^\prime)}}\right)  \cdot \frac{m^2 e^{-2m(t-t^\prime)}}{(1-e^{-2m(t-t^\prime)})^2}\\
        &\cdot \left\|\vw-\vw^\prime e^{-m(t-t^\prime)}+ \left(1-e^{-m(t-t^\prime)}\right)\cdot \left(\frac{\tilde{\eta}}{e^{m\eta} -1}\right)\grad f(\vw_0)\right\|^2.
    \end{aligned}
\end{equation}
Combining Eq.~\ref{eq:l1pd} and Eq.~\ref{eq:l2pd}, we have
\begin{equation}
    \small
    \label{eq:pdt}
    \begin{aligned}
        &\frac{\partial p_{\vw_0}(\vw,t|\vw^\prime, t^\prime)}{\partial t^\prime} = \frac{\partial (l_1(\vw^\prime , t^\prime)l_2(\vw^\prime, t^\prime))}{\partial t^\prime}=\frac{\partial l_1(\vw^\prime, t^\prime)}{\partial t^\prime}\cdot l_2(\vw^\prime, t^\prime) + l_1(\vw^\prime, t^\prime)\cdot \frac{\partial l_2(\vw^\prime, t^\prime)}{\partial t^\prime}\\
        =& \exp \left[-\frac{m}{2}\cdot \frac{\left\|\vw-\vw^\prime e^{-m(t-t^\prime)}+ \left(1-e^{-m(t-t^\prime)}\right)\cdot \left(\frac{\tilde{\eta}}{e^{m\eta} -1}\right)\grad f(\vw_0)\right\|^2}{1-e^{-2m(t-t^\prime)}}\right]\\
        &\cdot 2\pi d\cdot \left(\frac{2\pi (1-e^{-2m(t-t^\prime)})}{m}\right)^{-d/2-1}\cdot e^{-2m(t-t^\prime)}\\
        & +  \exp \left[-\frac{m}{2}\cdot \frac{\left\|\vw-\vw^\prime e^{-m(t-t^\prime)}+ \left(1-e^{-m(t-t^\prime)}\right)\cdot \left(\frac{\tilde{\eta}}{e^{m\eta} -1}\right)\grad f(\vw_0)\right\|^2}{1-e^{-2m(t-t^\prime)}}\right] \\
        & \cdot \left(\frac{2\pi (1-e^{-2m(t-t^\prime)})}{m}\right)^{-d/2}\cdot \left[\frac{m^2 e^{-m(t-t^\prime)}}{1-e^{-2m(t-t^\prime)}}\cdot \left(\vw-\vw^\prime e^{-m(t-t^\prime)}+ \left(1-e^{-m(t-t^\prime)}\right)\cdot \left(\frac{\tilde{\eta}}{e^{m\eta} -1}\right)\grad f(\vw_0)\right)\right. \\
        &\left.\cdot \left(\vw^\prime + \frac{\tilde{\eta}}{e^{m\eta} -1} \grad f(\vw_0)\right) -\frac{m^2 e^{-2m(t-t^\prime)}}{(1-e^{-2m(t-t^\prime)})^2} \cdot \left\|\vw-\vw^\prime e^{-m(t-t^\prime)}+ \left(1-e^{-m(t-t^\prime)}\right)\cdot \left(\frac{\tilde{\eta}}{e^{m\eta} -1}\right)\grad f(\vw_0)\right\|^2\right].
    \end{aligned}
\end{equation}
Next, we consider the partial derivative of $l_1$ w.r.t. $\vw^\prime$.
We have
\begin{equation*}
    \frac{\partial l_{2.1}(\vw^\prime, t^\prime)}{\partial \vw^\prime} = \vw-\vw^\prime e^{-m(t-t^\prime)}+ \left(1-e^{-m(t-t^\prime)}\right)\cdot \left(\frac{\tilde{\eta}}{e^{m\eta} -1}\right)\grad f(\vw_0)
\end{equation*}
which means
\begin{equation}
    \label{eq:l2pdw}
    \begin{aligned}
        &\frac{\partial p_{\vw_0}(\vw,t|\vw^\prime, t^\prime)}{\partial \vw^\prime}= l_1(\vw^\prime, t^\prime)\cdot \frac{\partial l_2(\vw^\prime, t^\prime)}{\partial \vw^\prime} = l_1(\vw^\prime, t^\prime)\cdot l_2(\vw^\prime, t^\prime)\cdot \frac{-m}{2(1-e^{-2m(t-t^\prime)})}\cdot \frac{\partial l_{2.1}(\vw^\prime, t^\prime)}{\partial \vw^\prime}\\
        = & \left(\frac{2\pi (1-e^{-2m(t-t^\prime)})}{m}\right)^{-d/2}\cdot  \exp \left[-\frac{m}{2}\cdot \frac{\left\|\vw-\vw^\prime e^{-m(t-t^\prime)}+ \left(1-e^{-m(t-t^\prime)}\right)\cdot \left(\frac{\tilde{\eta}}{e^{m\eta} -1}\right)\grad f(\vw_0)\right\|^2}{1-e^{-2m(t-t^\prime)}}\right]\\
        & \cdot \frac{m e^{-m(t-t^\prime)}}{1-e^{-2m(t-t^\prime)}}\cdot\left(\vw-\vw^\prime e^{-m(t-t^\prime)}+ \left(1-e^{-m(t-t^\prime)}\right)\cdot \left(\frac{\tilde{\eta}}{e^{m\eta} -1}\right)\grad f(\vw_0)\right).
    \end{aligned}
\end{equation}
Then, we can easily obtain that
\begin{equation*}
    \begin{aligned}
        &\frac{\partial^2 p_{\vw_0}(\vw,t|\vw^\prime, t^\prime)}{\partial \vw_i^\prime \partial\vw_i^\prime}= \partial \left(l_1(\vw^\prime, t^\prime)\cdot  l_2(\vw^\prime, t^\prime)\cdot \frac{-m}{2(1-e^{-2m(t-t^\prime)})}\cdot \frac{\partial l_{2.1}(\vw^\prime, t^\prime)}{\partial \vw^\prime}\right)_i \Big/ \partial \vw^\prime_i\\
        =& \left(\frac{2\pi (1-e^{-2m(t-t^\prime)})}{m}\right)^{-d/2}\cdot \frac{m e^{-m(t-t^\prime)}}{1-e^{-2m(t-t^\prime)}}\cdot \left[\left(-e^{-m(t-t^\prime)}\right)\cdot l_2(\vw^\prime, t^\prime)\right.\\
        &\left. +\frac{\partial l_2(\vw^\prime, t^\prime)}{\partial \vw^\prime_i}\cdot \left(\vw-\vw^\prime e^{-m(t-t^\prime)}+ \left(1-e^{-m(t-t^\prime)}\right)\cdot \left(\frac{\tilde{\eta}}{e^{m\eta} -1}\right)\grad f(\vw_0)\right)_i\right]\\
        = & \left(\frac{2\pi (1-e^{-2m(t-t^\prime)})}{m}\right)^{-d/2}\cdot \frac{me^{-m(t-t^\prime)}}{1-e^{-2m(t-t^\prime)}}\cdot\\
        &\exp \left(-\frac{m}{2}\cdot \frac{\left\|\vw-\vw^\prime e^{-m(t-t^\prime)}+ \left(1-e^{-m(t-t^\prime)}\right)\cdot \left(\frac{\tilde{\eta}}{e^{m\eta} -1}\right)\grad f(\vw_0)\right\|^2}{1-e^{-2m(t-t^\prime)}}\right)\cdot (-e^{-m(t-t^\prime)})\\
        & + \left(\frac{2\pi (1-e^{-2m(t-t^\prime)})}{m}\right)^{-d/2}\cdot \exp \left(-\frac{m}{2}\cdot \frac{\left\|\vw-\vw^\prime e^{-m(t-t^\prime)}+ \left(1-e^{-m(t-t^\prime)}\right)\cdot \left(\frac{\tilde{\eta}}{e^{m\eta} -1}\right)\grad f(\vw_0)\right\|^2}{1-e^{-2m(t-t^\prime)}}\right)\\
        & \cdot \frac{m^2 e^{-2m(t-t^\prime)}}{\left(1-e^{-2m(t-t^\prime)}\right)^2}\cdot\left(\vw-\vw^\prime e^{-m(t-t^\prime)}+ \left(1-e^{-m(t-t^\prime)}\right)\cdot \left(\frac{\tilde{\eta}}{e^{m\eta} -1}\right)\grad f(\vw_0)\right)_i^2.\\
    \end{aligned}
\end{equation*}
In this condition, we have
\begin{equation*}
    \label{eq:kbe_itosde}
    \begin{aligned}
        &\left(-m\vw^\prime - \frac{m\tilde{\eta}}{e^{m\eta}-1} \grad f(\vw_0)\right)\cdot \frac{\partial p_{\vw_0}(\vw, t|\vw^\prime, t^\prime)}{\partial \vw^\prime} + 1\cdot \sum_{i=1}^d \frac{\partial^2 p(\vw,t | \vw^\prime, t^\prime ,\vw_0)}{\left(\partial \vw_i\right)^2}\\
        = & -\left(\frac{2\pi (1-e^{-2m(t-t^\prime)})}{m}\right)^{-d/2}\cdot  \exp \left[-\frac{m}{2}\cdot \frac{\left\|\vw-\vw^\prime e^{-m(t-t^\prime)}+ \left(1-e^{-m(t-t^\prime)}\right)\cdot \left(\frac{\tilde{\eta}}{e^{m\eta} -1}\right)\grad f(\vw_0)\right\|^2}{1-e^{-2m(t-t^\prime)}}\right]\\
        & \cdot \frac{m e^{-m(t-t^\prime)}}{1-e^{-2m(t-t^\prime)}}\cdot\left(\vw-\vw^\prime e^{-m(t-t^\prime)}+ \left(1-e^{-m(t-t^\prime)}\right)\cdot \left(\frac{\tilde{\eta}}{e^{m\eta} -1}\right)\grad f(\vw_0)\right)\cdot \left(m\vw^\prime + \frac{\tilde{m\eta}}{e^{m\eta} -1} \grad f(\vw_0)\right)\\
        &+\left(\frac{2\pi (1-e^{-2m(t-t^\prime)})}{m}\right)^{-d/2}\cdot \exp \left(-\frac{m}{2}\cdot \frac{\left\|\vw-\vw^\prime e^{-m(t-t^\prime)}+ \left(1-e^{-m(t-t^\prime)}\right)\cdot \left(\frac{\tilde{\eta}}{e^{m\eta} -1}\right)\grad f(\vw_0)\right\|^2}{1-e^{-2m(t-t^\prime)}}\right)\\
        & \cdot \frac{m^2 e^{-2m(t-t^\prime)}}{\left(1-e^{-2m(t-t^\prime)}\right)^2}\cdot\left\|\vw-\vw^\prime e^{-m(t-t^\prime)}+ \left(1-e^{-m(t-t^\prime)}\right)\cdot \left(\frac{\tilde{\eta}}{e^{m\eta} -1}\right)\grad f(\vw_0)\right\|^2\\
        & + \left(\frac{2\pi (1-e^{-2m(t-t^\prime)})}{m}\right)^{-d/2}\cdot \exp \left(-\frac{m}{2}\cdot \frac{\left\|\vw-\vw^\prime e^{-m(t-t^\prime)}+ \left(1-e^{-m(t-t^\prime)}\right)\cdot \left(\frac{\tilde{\eta}}{e^{m\eta} -1}\right)\grad f(\vw_0)\right\|^2}{1-e^{-2m(t-t^\prime)}}\right)\\
        &\cdot d\cdot \frac{me^{-m(t-t^\prime)}}{1-e^{-2m(t-t^\prime)}}\cdot (-e^{-m(t-t^\prime)})\\
        = & -\frac{\partial p_{\vw_0}(\vw,t|\vw^\prime, t^\prime)}{\partial t^\prime},
    \end{aligned}
\end{equation*}
where the last equation can be obtained by comparing the result with Eq.~\ref{eq:pdt}
Hence, Eq.~\ref{def:kbe_equ} is established.

\end{document}